\theoremstyle{plain}
\newtheorem{theorem}{Theorem}[section]
\crefname{theorem}{theorem}{theorems}
\Crefname{Theorem}{Theorem}{Theorems}
\newtheorem{proposition}[theorem]{Proposition}
\crefname{proposition}{proposition}{propositions}
\Crefname{Proposition}{Proposition}{Propositions}
\newtheorem{lemma}[theorem]{Lemma}
\crefname{corollary}{corollary}{corollaries}
\Crefname{Corollary}{Corollary}{Corollaries}
\theoremstyle{definition}
\newtheorem{definition}[theorem]{Definition}
\crefname{definition}{definition}{definitions}
\Crefname{Definition}{Definition}{Definitions}
\newtheorem{assumption}[theorem]{Assumption}
\Crefname{assumption}{Assumption}{Assumptions}
\crefname{assumption}{assumption}{assumptions}
\theoremstyle{remark}
\newtheorem{remark}[theorem]{Remark}
\crefname{remark}{remark}{remarks}
\Crefname{Remark}{Remark}{Remarks}
\DeclareMathAlphabet{\mathpzc}{OT1}{pzc}{m}{it}
\newcommand{\dataset}{{\mathcal D}}
\newcommand{\Xdatan}{{X^\dataset_{n}}}
\newcommand{\Xdata}{X^\dataset}
\newcommand{\pidata}{\pi_\dataset}
\newcommand{\A}{\mathcal{A}}
\newcommand{\N}{\mathbb{N}}
\newcommand{\R}{\mathbb{R}}
\newcommand{\dd}{\mathrm{d}}
\newcommand{\renyi}{\mathsf{D}}
\def\msk{\mathsf{K}}
\def\msb{\mathsf{B}}
\def\msr{\mathsf{R}}
\def\mcbb{\mathcal{B}}  %%% \mcb est déjà pris
\def\mcf{\mathcal{F}}
\def\mcs{\mathcal{S}}
\def\rset{\mathbb{R}}
\def\nset{\mathbb{N}}
\def\rmE{\mathrm{E}}
\newcommand{\1}{\mathbbm{1}}
\newcommand{\LeftEqNo}{\let\veqno\@@leqno}
\newcommand{\PE}{\mathbb{E}}
\newcommand{\PP}{\mathbb{P}}
\newcommand{\QQ}{\mathbb{Q}}
\newcommand{\TV}{\mathrm{TV}}
\newcommand{\tvnorm}[1]{\| #1 \|_{\mathrm{TV}}}
\def\eqsp{\;}
\newcommand{\ccint}[1]{\left[#1\right]}
\def\TV{\mathrm{TV}}
\newcommand{\opnorm}[1]{{\left\vert\kern-0.25ex\left\vert\kern-0.25ex\left\vert #1 
    \right\vert\kern-0.25ex\right\vert\kern-0.25ex\right\vert}}
\newcommand{\CPP}[3][]
{\ifthenelse{\equal{#1}{}}{{\mathbb P}\left(\left. #2 \, \right| #3 \right)}{{\mathbb P}_{#1}\left(\left. #2 \, \right | #3 \right)}}
\newcommand\coupling[2]{\Gamma(\mu,\nu)}
\renewcommand{\geq}{\geqslant}
\renewcommand{\leq}{\leqslant}
\def\rmE{\mathrm{E}}
\icmltitlerunning{DP for MCMC algorithms}
\pgfplotsset{compat=1.18} 
\begin{document}

\twocolumn[
\icmltitle{Differential privacy guarantees of Markov chain Monte Carlo algorithms}

% It is OKAY to include author information, even for blind
% submissions: the style file will automatically remove it for you
% unless you've provided the [accepted] option to the icml2025
% package.

% List of affiliations: The first argument should be a (short)
% identifier you will use later to specify author affiliations
% Academic affiliations should list Department, University, City, Region, Country
% Industry affiliations should list Company, City, Region, Country

% You can specify symbols, otherwise they are numbered in order.
% Ideally, you should not use this facility. Affiliations will be numbered
% in order of appearance and this is the preferred way.
% \icmlsetsymbol{equal}{*}

\begin{icmlauthorlist}
\icmlauthor{Andrea Bertazzi}{yyy}
\icmlauthor{Tim Johnston}{comp}
\icmlauthor{Gareth O. Roberts}{sch}
\icmlauthor{Alain Durmus}{yyy}
%\icmlauthor{}{sch}
%\icmlauthor{}{sch}
\end{icmlauthorlist}

\icmlaffiliation{yyy}{\'Ecole polytechnique, Institut Polytechnique de Paris, France}
\icmlaffiliation{comp}{Université Paris Dauphine - PSL, France}
\icmlaffiliation{sch}{University of Warwick, UK}

\icmlcorrespondingauthor{Andrea Bertazzi}{andrea.bertazzi@polytechnique.edu}

% You may provide any keywords that you
% find helpful for describing your paper; these are used to populate
% the "keywords" metadata in the PDF but will not be shown in the document
\icmlkeywords{Differential privacy, MCMC algorithms}

\vskip 0.3in
]

% this must go after the closing bracket ] following \twocolumn[ ...

% This command actually creates the footnote in the first column
% listing the affiliations and the copyright notice.
% The command takes one argument, which is text to display at the start of the footnote.
% The \icmlEqualContribution command is standard text for equal contribution.
% Remove it (just {}) if you do not need this facility.

\printAffiliationsAndNotice{}  % leave blank if no need to mention equal contribution
% \printAffiliationsAndNotice{\icmlEqualContribution} % otherwise use the standard text.

\begin{abstract}
This paper aims to provide differential privacy (DP) guarantees for Markov chain Monte Carlo (MCMC) algorithms. In a first part, we establish DP guarantees on samples output by MCMC algorithms as well as Monte Carlo estimators associated with these methods under assumptions on the convergence properties of the underlying Markov chain. In particular, our results highlight the critical condition of ensuring the target distribution is differentially private itself. In a second part, we specialise our analysis to the unadjusted Langevin algorithm and stochastic gradient Langevin dynamics and establish guarantees on their (Rényi) DP. To this end, we develop a novel methodology based on Girsanov's theorem combined with a perturbation trick to obtain bounds for an unbounded domain and in a non-convex setting. We establish: (i) uniform in $n$ privacy guarantees when the state of the chain after $n$ iterations is released, (ii) bounds on the privacy of the entire chain trajectory. These findings provide concrete guidelines for privacy-preserving MCMC.
\end{abstract}

\section{Introduction}
The framework of differential privacy (DP) \citep{dwork2006differential,Dwork_2006} has become the standard approach for designing statistical and machine learning algorithms with quantitative guarantees on the information that their output reveals about the data. In particular, the DP guarantees of Bayesian statistical methods have already been investigated in various works, typically focusing on the DP analysis of samples drawn from the posterior distribution associated with the data and problem at hand \citep{wang2015privacy,dimitrakakis_2016,geumlek2017renyi,hu2025privacyguaranteesposteriorsampling}. In practice,  Markov chain Monte Carlo (MCMC) algorithms are required to produce approximate samples from the posterior, and for this reason their DP guarantees have also been the object of extensive research \citep{heikkila2019differentially,yildirim2019exact,li2019connecting,chourasia2021differential,altschuler2022privacy,zhang2023dp}. 

Several works obtain DP of the one-step transitions of the MCMC algorithm by injecting extra noise in various ways, e.g. in the acceptance-rejection step of Metropolis-Hastings-type algorithms \citep{heikkila2019differentially,yildirim2019exact}, or leveraging the additional randomness already introduced by subsampling strategies \citep{wang2015privacy,bierkens2022federated}, optionally together with clipping of the gradient that drives the moves of the chain \citep{29314d8a28074bd7a58ae560f691f253,10.1145/2976749.2978318}. Many iterations of the MCMC chain are required to approach the posterior distribution, and the typical strategy to obtain guarantees on the DP of the algorithm after some number $n$ of iterations is to use composition bounds \citep{10.1145/2976749.2978318, pmlr-v97-wang19c, 10.1109/FOCS.2014.56, pmlr-v195-ganesh23a}. However, the privacy implied using composition bounds for Markov chains naturally decays with the number of steps \citep{pmlr-v37-kairouz15} and as a consequence the injected noise must scale with the number of iterations in order to obtain uniform privacy bounds for the final draw. Many other works using different techniques to prove uniform-in-time privacy bounds for the final draw from a Markov chain have bad dependence on the radius of the state space or the stepsize, see \citep{altschuler2022privacy, asoodeh2023privacylossnoisystochastic}.
These limitations were overcome in \citet{chourasia2021differential}, in which uniform-in-time bounds on the (Rényi) DP of gradient descent-type algorithms was obtained, however under a strong convexity assumption.

% This issue was addressed by \citet{chourasia2021differential}, who used a PDE approach to bound the Rényi differential privacy \citep{Mironov_2017} of a noisy gradient descent algorithm uniformly in time. However, the approach of \citet{chourasia2021differential} depends on the log-Sobolev constant of the law of the algorithm, hence bounds could only be obtained in the strongly convex case. \citet{altschuler2022privacy} proposed an alternative approach yielding uniform-in-time bounds for stochastic gradient descent, but with a dependence on the inverse of the step size and the diameter of the state space that leads to poor privacy guarantees in the high-accuracy regime.

\paragraph{Contributions of this work}
In this paper we study the differential privacy of MCMC algorithms. The paper is divided in two parts.

In the first part (\Cref{sec:convergence_and_DP}) we consider general MCMC algorithms and establish clear connections between the DP guarantees of the posterior distribution and of a corresponding MCMC algorithm. 
Assuming DP of the posterior, \Cref{prop:geomerg_to_DP,prop:ergodic_average_mainresult} show $(\varepsilon,\delta)$-DP for the MCMC algorithm when we either release the $n$-th state of the chain or a (noisy) Monte Carlo estimator. These result show that, under suitable assumptions, the MCMC algorithm inherits the ``good" privacy properties of the posterior. Analogously, \Cref{cor:negativeresult_DP,prop:tvdistance_lowerbound} illustrate how ``bad" privacy properties of the posterior affect the privacy of an MCMC algorithm.
In particular, \Cref{cor:negativeresult_DP} shows how an MCMC algorithm violates DP at a certain level $(\varepsilon,\delta)$ whenever it is sufficiently close to the posterior, assuming the posterior is not $(\varepsilon,\delta')$-DP for some $\delta'>\delta.$
On the other hand, \Cref{prop:tvdistance_lowerbound} shows that if the posterior has weaker DP guarantees than the MCMC algorithm, then the law of the MCMC chain after $n$ iterations can be far from the posterior in total variation distance.
These results emphasise the importance of starting with a differentially private posterior, rather than only focusing on the one-step DP of the MCMC algorithm as often suggested in the literature. This viewpoint guarantees that the output of the MCMC algorithm can then be both private and close to the posterior.

% \Cref{cor:negativeresult_DP,prop:tvdistance_lowerbound} show that the DP of the posterior is the crucial starting point to perform differentially private Bayesian inference. 
% In particular, \Cref{prop:tvdistance_lowerbound} shows that if the posterior has weaker DP guarantees than the MCMC algorithm, then the law of the MCMC chain after $n$ iterations can be far from the posterior in total variation distance.
% This result emphasises the importance of starting with a differentially private model, a property that has been mostly neglected in the literature. 
% Two other results presented in \Cref{sec:convergence_and_DP}, \Cref{prop:geomerg_to_DP,prop:ergodic_average_mainresult}, assume DP of the posterior and use it to show $(\varepsilon,\delta)$-DP for the MCMC algorithm when we either release the $n$-th state of the chain or a (noisy) Monte Carlo estimator. These bounds are tight when the algorithm is close to equilibrium.

In \Cref{sec:chara_difssuion} we prove privacy bounds for a class of Markov chains in a non-convex setting, using a novel approach to establish uniform-in-time $(\varepsilon,\delta)$-DP and Rényi-DP guarantees of the $n$th state of the chain, as well as bounds for the trajectory up to the $n$th state. We show in \Cref{sec:approach_RadonNikodym} that it is possible to prove privacy guarantees via high probability bounds on the Radon-Nikodym derivative of different probability measures on the underlying probability space. In \Cref{subsec: DP of SDEs} we illustrate how this strategy can be applied to the case of MCMC algorithms based on diffusions, where we rely on Girsanov's theorem to express the Radon-Nikodym derivative. Using a careful perturbation technique, we are able to apply our approach to the case where only the final state after $n$ iterations is released, and not the entire path. In this case, we obtain uniform-in-time guarantees both for DP and Rényi-DP. Our strategy of proof also  allows to obtain bounds for the entire trajectory of a variety of stochastic algorithms, improving on $(\epsilon, \delta)$-DP composition bounds and matching Rényi composition bounds. In particular, we show that the entire trajectory up to the $n$-th iteration is $(\varepsilon,\delta)$-DP, where $\varepsilon$ is $O(n+\sqrt{n\log(1/\delta)})$ and $\delta>0$ is constant, and also $(\alpha, \varepsilon)$-Rényi-DP with $\varepsilon=O(\alpha n)$ for $\alpha>0$ constant. These techniques have the additional advantage of extending easily to the continuous time case.
We then focus our analysis on two classical MCMC algorithms based on the Langevin diffusion: the unadjusted Langevin algorithm and its stochastic gradient variant (see \Cref{sec: privacy of stoch alg}). In doing so, we address the open problem of obtaining uniform-in-time DP guarantees in a non-convex setting.

\section{Preliminaries}\label{sec:preliminaries}
In Bayesian statistics, the primary object of interest is the posterior distribution of the parameter $\theta \in \rmE$ given an observed dataset $\dataset \in \mathcal{S}$. The posterior distribution, denoted as $\pi_\dataset$, is of the form 
\begin{equation}\label{eq:posterior}
    \pi_\dataset(\msb) =\left.  \int_{\msb} L_\dataset(\theta) \lambda(\dd \theta) \middle/ {Z_\dataset}\right. \eqsp, \quad \msb \in \mathcal{B}(\rmE) \eqsp,
\end{equation}
where $(\rmE,\mcbb(\rmE))$ is a Borel space,  $L_\dataset(\theta)$ is the likelihood associated with $\dataset$, $\lambda$ is a prior distribution for the parameter, and $Z_\dataset = \int_\rmE L_\dataset(\theta)\eta(\dd \theta) $.
In order to make practical use of the Bayesian framework, it is then essential to have access to key statistics of the posterior distribution, such as its moments. MCMC algorithms aim to solve this task relying on a Markov chain, $(\Xdatan)_{n\geq 1}$, which has law that converges to $\pi_\dataset$ asymptotically in the number of iterations $n$. Throughout the paper, we shall denote the transition kernel of the Markov chain as $P_\dataset:\rmE\times\mathcal{B}(\rmE) \to  [0,1 ]$, where this means $\Xdatan \sim P_\dataset(\Xdata_{n-1},\cdot)$ and also that $\Xdatan \sim P^n_\dataset(\Xdata_{0},\cdot)$, where $X^\dataset_{0}$ is the initial state of the chain. In particular, MCMC algorithms can be used to estimate expectations $\pidata(f):=\int_\rmE f(\theta) \pidata(\dd \theta)$, for some statistics $f : \rmE \to \rset$ thanks to associated  Monte Carlo averages  $ \frac1N \sum_{n=1}^N f(\Xdatan) $, in the sense that $ \lim_{N\to\infty}\frac1N \sum_{n=1}^N f(\Xdatan) =\pidata(f)$ \cite{robert2004monte}. There can be several possible outputs of an MCMC algorithm, such as $\Xdatan$, the state of the chain at time $n$, or $(\Xdata_1,\dots,\Xdatan)$), the entire path up to time $n$, or also the Monte Carlo estimator $\frac1N\sum_{n=1}^N f(\Xdatan)$. In either case, we can think of our MCMC methods as a randomised algorithm $\A(\dataset)$, that is a random function of the dataset.
We refer to \citet{rob_ros_mcmc_survey} for an introduction to the main concepts required to obtain (asymptotically) valid MCMC algorithms.

We call \emph{randomised algorithm} any function $\dataset  \mapsto \A(\dataset)$, such that $\A(\dataset)$ is a  random variable on a probability space $(\Omega, \mathbb{P},\mathcal{F})$. We denote the law of a randomised algorithm $\A$ as $P_{\A(\dataset)}$, that is defined for any measurable set $\msb$ as 
\begin{equation}
    P_{\A(\dataset)}(\msb) := \PP(\A(\dataset)\in \msb).
\end{equation}
The framework of differential privacy \citep{dwork2006differential} compares the output of a randomised algorithm obtained giving two adjacent datasets as input. 
Various notions of adjacency between datasets can be introduced, where the most popular considers adjacent any two datasets that differ in only one entry.
% typically requiring that $\rho(\dataset,\dataset')\leq 1$ for a chosen (pseudo) metric $\rho$, e.g. the Hamming distance $\rho(\dataset,\dataset')=\sum_{i=1}^n \1_{x_i\neq x'_i}$, where $\dataset = \{ x_i\}_{i=1}^n$ and $ \dataset' = \{  x_i'\}_{i=1}^n$. In the following, we say two datasets $\dataset,\dataset'$ are adjacent to indicate that $\rho(\dataset,\dataset')\leq 1$ but we do not specify the choice of metric.
Below we state the definition of differential privacy.
\begin{definition}\label{def:DP}
    A randomised algorithm $\A$ is $(\varepsilon,\delta)$-differentially private, for $\varepsilon,\delta \geq 0$, if for any measurable set $\msb\in\mathcal{B}(\rmE)$
    \begin{equation}\label{eq:DP}
        P_{\A(\dataset)}(\msb) \leq e^{\varepsilon} \,P_{\A(\dataset')}(\msb)+ \delta,
    \end{equation}
    for any pair of adjacent datasets $\dataset,\dataset'\in\mathcal{S}$.
    %, i.e. the cardinal of the symmetric differences between $\dataset$ and $\dataset'$ is at most two. In words, they differ in at most one instance.
\end{definition}
% Note that the statement \eqref{eq:DP} does not account for the the randomness of the dataset, but only of the additional randomness of the algorithm $\A.$

Following \Cref{def:DP}, we say that a posterior distribution is $(\varepsilon,\delta)$-differentially private when, for all measurable sets $\msb$, it holds that
\begin{equation}\label{eq:DP_posterior}
   \pi_\dataset(\msb )\leq e^{\varepsilon}\,\pi_{\dataset'}(\msb) + \delta ,
\end{equation}
for all adjacent datasets $\dataset,\dataset'.$
This is interpreted as a privacy guarantee of an i.i.d. sample from the posterior.
%In general, off-the-shelf Bayesian models can satisfy Equation \eqref{eq:DP_posterior} for suitable choices of $\rho$ and of the prior distribution $\eta$. An analysis of this is given in \citet{dimitrakakis_2016}, who define $\pi$ to be $(\varepsilon,\delta)$-differentially private under $\rho$ when 
%\begin{equation}
%    \pi(\msb\rvert \dataset) \leq e^{\varepsilon \rho(\dataset,\dataset')}\,\pi(\msb\rvert \dataset') + \delta\rho(\dataset,\dataset'). \label{eq:DP_posterior_rho}
%\end{equation}
%Clearly \eqref{eq:DP_posterior_rho} implies \eqref{eq:DP_posterior} with the same $\varepsilon$ and $\delta.$
% \begin{definition}
%     A randomised algorithm $\A$ is $(\varepsilon,\delta)$-differentially private under a pseudo-metric $\rho$ if for any measurable set $\msb$
%     \begin{equation}
%         \mathbb{P}(\A(\dataset) \in \msb\rvert \dataset) \leq e^{\varepsilon \rho(\dataset,\dataset')} \,\mathbb{P}(\A(\tilde \dataset)\in \msb\rvert \tilde \dataset) + \delta\rho(\dataset,\dataset').
%     \end{equation}
% \end{definition}
% Then, we shall say that a posterior distribution $\pi$ is $(\varepsilon,\delta)$-differentially private under $\rho$ when 
% \begin{equation}\label{eq:DP_posterior_rho}
%     \pi(\msb\rvert \dataset) \leq e^{\varepsilon \rho(\dataset,\dataset')}\,\pi(\msb\rvert \tilde \dataset) + \delta\rho(\dataset,\dataset')
% \end{equation}
% which is interpreted as a privacy guarantee of an i.i.d. sample from $\pi(\cdot\rvert \dataset)$, i.e. the randomised algorithm is $\A(\dataset)=\theta$ for $\theta \sim \pi(\cdot\rvert \dataset)$. 

A related notion of privacy that we consider in this article is that of \emph{Rényi differential privacy} \citep{Mironov_2017}. In order to introduce the Rényi-DP we first need to define the Rényi divergence. For $\alpha>1$ and two probability distributions, $P$ and $Q$, for which their Radon-Nikodym derivative is well defined, their Rényi divergence is 
\begin{equation}\label{eq:Rényi_divergence}
    \renyi_\alpha(P\Vert Q) := \frac{1}{\alpha-1} \log\biggr( \int_{\Omega}  \left(\frac{\dd P}{\dd Q}\right)^\alpha dQ\biggr). 
\end{equation}
% A randomised algorithm $\A$ is said to be Rényi-differentially private when the Rényi-divergence between the law corresponding to two adjacent datasets, $\dataset,\dataset'$, is bounded. In the next definition, we denote the law of $\A(\dataset)$ as $P_{\A(\dataset)}$.
We then have the following definition.
\begin{definition}
    A randomised algorithm $\A$ is $(\alpha,\varepsilon)$-Rényi differentially private , $\varepsilon \geq 0$ and $\alpha >1$, if for all adjacent sets $\dataset,\dataset'$ it holds that $\renyi_\alpha(P_{\A(\dataset)}\rVert P_{\A(\dataset')})\leq \varepsilon.$
    % \begin{equation}
    %     \renyi_\alpha(P_{\A(\dataset)}\rVert P_{\A(\dataset')})\leq \varepsilon.
    % \end{equation}
\end{definition}
% When $\alpha\to 1$, the Rényi divergence converges to the KL divergence, that is
% \begin{equation}
%     \renyi_1(P\rVert Q) = \PE_{P} \left[ \log \frac{\dd P}{\dd Q}\right].
% \end{equation}
When $\alpha \to \infty$ we recover $(\varepsilon,0)$-DP.
Notice also that a $(\alpha,\varepsilon)$-Rényi-differentially private algorithm is also $(\varepsilon - \frac1{\alpha-1}\log \delta,\delta)$-differentially private for any $\delta\in(0,1)$ (see  Proposition 3 in \citet{Mironov_2017}). Similarly to \eqref{eq:DP_posterior}, we say a posterior distribution is $(\alpha,\varepsilon)$-Rényi-differentially private when $\renyi_\alpha(\pi_\dataset\Vert\pi_{\dataset'})\leq \varepsilon$ for all adjacent datasets $\dataset,\dataset'$.

\section{From convergence to differential privacy}\label{sec:convergence_and_DP}

This section establishes the connection between the DP of an MCMC algorithm and its convergence to the posterior distribution. 

\subsection{Differential privacy of a Markov chain after $n$ steps}
Here we are interested in the following question: \emph{what is the relation between  the differential privacy of the $n$-th state of an MCMC algorithm, the differential privacy of the target distribution, and the total variation  distance between them?}

In order to address the question above, we consider the randomised algorithm $\A_s(\dataset) = X_n^\dataset$, where $(X_k^\dataset)_{k\in\nset}$ is a Markov chain with transition kernel $P_\dataset$ and that is initialised from a probability distribution $\nu_\dataset.$ The law of $X_n^\dataset$ is denoted as $\nu_\dataset P_\dataset^n(\cdot) := \int \nu_\dataset(\dd x) P_\dataset^n(x,\cdot)$.
If $\tvnorm{\cdot}$ denotes the total variation distance (see \Cref{def:tvdistance}), for two families of probability distributions $\mu_\dataset$, $\nu_\dataset$ that satisfy $\lVert \mu_\dataset - \nu_\dataset\rVert_{\TV} \leq \beta$, we have that $(\varepsilon,\delta)$-DP of $\mu_\dataset$ implies $(\varepsilon,\delta + \beta(e^\varepsilon+1))$-DP of $\nu_\dataset$ (see \Cref{prop:tv_distance_DP} in \Cref{sec:proof_prop_TV} for the proof of this result).
In the case of MCMC algorithms, the law of $\A(\dataset)$ depends on the number of iterations $n,$ and so will its total variation distance to $\pi_\dataset.$ Nevertheless, also in our case we can apply \Cref{prop:tv_distance_DP} to address the question above.
In this sense we make the following assumption, which requires a bound on the total variation distance that is uniform over datasets and decreasing in the number of iterations.

% \alain{think about the dependence on the data
% one idea is to make results from \url{https://arxiv.org/pdf/1909.00966} quantitative not to pay the price of the data size in the constants
% }
\begin{assumption}[Data-uniform convergence of the Markov chain]\label{ass:convergence_markovchain}
    Consider a family of transition kernels $\{P_\dataset:\rmE\times\mathcal{B}(\rmE)\to[0,1]:\, \dataset \in \mathcal{S}\}$ and a family of initial distributions  $\{ \nu_\dataset: \dataset\in\mathcal{S}\}$. 
    There exist a positive, decreasing function $\msr$ such that $\lim_{m\to\infty}\msr(m)=\underline r \geq 0$, and a constant $\zeta <\infty$ such that for all $\dataset \in \mathcal{S}$ and all $n\in\N$
    \begin{equation}\label{eq:uniform_convergence_chain}
        \lVert \nu_\dataset P^n_\dataset - \pi_\dataset\rVert_{\TV} \leq \zeta  \, \msr(n)\eqsp.
        % \lVert \nu_\dataset P^n(\cdot\rvert \dataset) - \pi(\cdot\rvert \dataset) \rVert_{V_\dataset} \leq C \rho^n \lVert \nu_\dataset(\cdot) - \pi(\cdot\rvert \dataset) \rVert_{V_\dataset}
    \end{equation}
    % where $\nu_\dataset P_\dataset^n(\cdot) = \int \nu_\dataset(\dd x) P_\dataset^n(x,\cdot)$ is the law after $n$ steps of the Markov chain with initial condition $\nu_\dataset$ and transition kernel $P_\dataset$. 
\end{assumption}

In particular, \Cref{ass:convergence_markovchain} requires that $\zeta$ and $\msr$ are independent of the dataset. The function $\msr$ is typically such that either $\lim_{m\to\infty} \msr(m) =\underline r = 0$, that is the when case the MCMC algorithm is asymptotically exact, or $\underline r > 0$, that is the case when the MCMC algorithm is biased, for instance as a result of using an unadjusted discretisation scheme of a continuous time process.
Under this assumption, we can obtain the following relations between the DP guarantees of $\A_s(\dataset)$ and $\pi_\dataset$.
\begin{proposition}\label{prop:geomerg_to_DP}
    % Suppose $\pi$ is $(\varepsilon,\delta)$-DP in the sense of \eqref{eq:DP_posterior} and 
    Consider the randomised algorithm $\A_s(\dataset) \sim \nu_\dataset P_\dataset^n$ and suppose \Cref{ass:convergence_markovchain} is verified. The following statements hold:
    \begin{enumerate}[wide,label=(\roman*)]
        \item If $\pi$ is $(\varepsilon,\delta)$-differentially private, then $\A_s(\dataset)$ is $(\varepsilon, \delta+\kappa \msr(n))$-differentially private with $\kappa = \zeta (e^\varepsilon+1).$
        \item If $\A_s(\dataset)$ is $(\varepsilon,\delta)$-differentially private for any $n$\,, then $\pi$ is $(\varepsilon,\delta+(1+e^{\varepsilon})\zeta \underline r)$-differentially private.
    \end{enumerate}
\end{proposition}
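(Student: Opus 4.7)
The plan is to prove both statements by combining the post-processing-type lemma on total variation distance and differential privacy (Proposition referenced in Section on proof of TV, call it the TV-DP lemma) with the data-uniform convergence bound in \Cref{ass:convergence_markovchain}. Intuitively, since $\nu_\dataset P_\dataset^n$ and $\pi_\dataset$ are close in total variation uniformly over $\dataset$, their DP guarantees must also be close, with the gap controlled by the TV bound $\zeta\msr(n)$. Concretely, both parts boil down to the elementary chain of inequalities
\begin{equation*}
\nu_\dataset P_\dataset^n(\msb) \leq \pi_\dataset(\msb) + \zeta\msr(n), \qquad \pi_{\dataset'}(\msb) \leq \nu_{\dataset'} P_{\dataset'}^n(\msb) + \zeta\msr(n),
\end{equation*}
holding for any measurable $\msb \in \mcb(\rmE)$ and any adjacent pair $\dataset,\dataset' \in \mathcal{S}$ by \Cref{ass:convergence_markovchain}.

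For (i), I would fix adjacent $\dataset,\dataset'$ and $\msb\in\mcb(\rmE)$ and chain these two bounds with the $(\varepsilon,\delta)$-DP hypothesis on the posterior: $\nu_\dataset P_\dataset^n(\msb) \leq \pi_\dataset(\msb)+\zeta\msr(n) \leq e^\varepsilon \pi_{\dataset'}(\msb) + \delta + \zeta\msr(n) \leq e^\varepsilon \nu_{\dataset'} P_{\dataset'}^n(\msb) + \delta + \zeta\msr(n)(1+e^\varepsilon)$, which is exactly the claim with $\kappa = \zeta(e^\varepsilon+1)$. This is really the content of the already-cited TV-DP lemma applied with $\beta = \zeta \msr(n)$, so one could alternatively invoke that lemma directly after noting $\lVert \nu_\dataset P_\dataset^n - \pi_\dataset\rVert_{\TV}\vee \lVert \nu_{\dataset'} P_{\dataset'}^n - \pi_{\dataset'}\rVert_{\TV}\leq \zeta\msr(n)$.

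For (ii), I would perform the symmetric argument: fix adjacent $\dataset,\dataset'$ and $\msb\in\mcb(\rmE)$, write $\pi_\dataset(\msb) \leq \nu_\dataset P_\dataset^n(\msb) + \zeta\msr(n)$, then apply the $(\varepsilon,\delta)$-DP hypothesis for $\A_s$ at step $n$ (which holds by assumption for every $n$), and finally bound $\nu_{\dataset'} P_{\dataset'}^n(\msb) \leq \pi_{\dataset'}(\msb) + \zeta\msr(n)$. This yields $\pi_\dataset(\msb) \leq e^\varepsilon \pi_{\dataset'}(\msb) + \delta + \zeta\msr(n)(1+e^\varepsilon)$ for every $n$. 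Since the left-hand side does not depend on $n$ while $\msr(n)\to \underline r$ monotonically, passing to the limit $n\to\infty$ gives $\pi_\dataset(\msb) \leq e^\varepsilon \pi_{\dataset'}(\msb) + \delta + \zeta\underline r (1+e^\varepsilon)$, which is the desired $(\varepsilon,\delta+(1+e^\varepsilon)\zeta\underline r)$-DP guarantee.

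There is essentially no technical obstacle here: the argument is a triangle-type inequality in TV combined with the DP definition, and its force comes entirely from the data-uniform nature of the TV bound in \Cref{ass:convergence_markovchain} (without uniformity in $\dataset$, one could not simultaneously compare both $\nu_\dataset P_\dataset^n$ to $\pi_\dataset$ and $\nu_{\dataset'} P_{\dataset'}^n$ to $\pi_{\dataset'}$ with the same rate). The only mild subtlety worth flagging in the write-up is that in (ii) the hypothesis must hold \emph{uniformly in $n$} with the same $(\varepsilon,\delta)$ (as the statement says ``for any $n$''), so that the limit $n\to\infty$ preserves the privacy parameters; and that the decreasing assumption on $\msr$ ensures $\underline r = \lim_{n\to\infty}\msr(n)$ is well defined and non-negative.
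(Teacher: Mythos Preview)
Your proposal is correct and essentially identical to the paper's argument: the paper proves both statements by invoking the TV-DP lemma (its Proposition~A.1) with $\beta=\zeta\msr(n)$, which amounts precisely to your chain of inequalities, and for (ii) it then passes to the limit $n\to\infty$ exactly as you do. Your write-up simply unpacks the lemma inline, which is fine.
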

\begin{proof}
    The first statement is obtained applying \Cref{prop:tv_distance_DP}.
    The second statement follows applying \Cref{prop:tv_distance_DP} to obtain that $\pi_\dataset$ is $(\varepsilon,\delta + (1+e^{\varepsilon})\zeta  \, \msr(n))$-differentially private, then taking the limit as $n\to \infty$.
\end{proof}
The first statement is similar to Proposition 3 in \citet{wang2015privacy}. We note also that the second statement in \Cref{prop:geomerg_to_DP} is still valid when the convergence bound \eqref{eq:uniform_convergence_chain} is not data-uniform.

% The following result is a simple consequence of the second statement in \Cref{prop:geomerg_to_DP}.
% \begin{corollary}\label{cor:negativeresult_DP}
%     Consider the randomised algorithm $\A_s(\dataset) \sim \nu_\dataset P_\dataset^n$ and suppose \Cref{ass:convergence_markovchain} is verified. Let $\varepsilon \geq 0$, $\delta\in [0,1)$, and $\delta'=\delta+(1+e^{\varepsilon})\zeta  \underline r$.
%     If $\pi$ is not $(\varepsilon,\delta')$-differentially private, then there exists $n\in\N$ such that  $\A_s(\dataset)$ is \underline{not} $(\varepsilon,\delta)$-differentially private.
% \end{corollary}
% \begin{proof}
%     The result holds by contradiction using the second statement in \Cref{prop:geomerg_to_DP}.
% \end{proof}

A simple corollary of \Cref{prop:geomerg_to_DP}-(ii) is that any MCMC algorithm that is asymptotically exact, i.e. such that $\underline r = 0$, will fail to be $(\varepsilon,\delta)$-differentially private for some number of iterations $n$ when the posterior itself is not $(\varepsilon,\delta)$-differentially private. The following proposition gives a quantitative result in this direction.
\begin{proposition}\label{cor:negativeresult_DP}
    Suppose \Cref{ass:convergence_markovchain} is verified. Let $\varepsilon \geq 0$, $\delta\in [0,1)$, and $\delta'\in(\delta,1)$ and suppose $\pi_\dataset$ is not $(\varepsilon,\delta')$-differentially private. Let $$n^* = \inf\{ n\in\N:\, (1+e^\varepsilon) \zeta \msr(n) \leq \delta' - \delta\}.$$ Then, $\A_s(\dataset)\sim \nu_\dataset P_\dataset^n$ is \underline{not} $(\varepsilon,\delta)$-differentially private for all $n \geq n^*$.
\end{proposition}
\begin{proof}
    The result follows applying \Cref{prop:negresultdp_general}, that can be found in \Cref{sec:proof_negresultdp}.
\end{proof}
\Cref{cor:negativeresult_DP} gives that $\A_s(\dataset)\sim \nu_\dataset P_\dataset^n$ is not $(\varepsilon,\delta)$-differentially private whenever it is sufficiently close to $\pi_\dataset$, assuming $\pi_\dataset$ is not $(\varepsilon,\delta')$-differentially private for some $\delta ' > \delta.$

Finally, we show that $\nu_\dataset P_\dataset^n$ can be far from $\pi_\dataset$ when $\nu_\dataset P_{\dataset}^n$ is $(\varepsilon,\delta_n)$-differentially private, while the posterior distribution violates a weaker DP guarantee. In particular, we say that the posterior distribution is not $(\varepsilon,\delta)$-differentially private when there exist adjacent datasets $\dataset,\dataset'\in\mathcal{S}$ and a measurable set $\msb$ such that $\pi_\dataset(\msb) > e^\varepsilon \pi_{\dataset'}(\msb) + \delta$.
\begin{proposition}\label{prop:tvdistance_lowerbound}
    % Let $\{P^n_\dataset:\dataset\in\mathcal{S}\}$ be a family of Markov kernels  and $\{\nu_\dataset:\dataset\in\mathcal{S}\}$ a family of initial distribution 
    Consider the randomised algorithm $\A(\dataset) \sim \nu_{\dataset} P_{\dataset}^n$ and assume it is $(\varepsilon,\bar \delta)$-differentially private.  Let $\{\pi_\dataset:\dataset\in\mathcal{S}\}$ be a family of posterior distributions that is \underline{not} $(\varepsilon,\delta)$-differentially private for some $\delta > \bar\delta$. 
    Then, there exists a dataset $\dataset \in\mathcal{S}$ such that
    \begin{equation}\label{eq:LB_TV}
        \lVert \nu_{\dataset} P_\dataset^n - \pi_\dataset\rVert_{\TV} > \frac{e^{-\varepsilon}}{1+e^{-\varepsilon}} (\delta-\bar\delta)\,.
    \end{equation}
    % Then, for every $\zeta\in(0,\delta - \delta_n)$ there exists a dataset $\dataset \in\mathcal{S}$ such that
    % \begin{equation}\label{eq:LB_TV}
    %     \lVert P_\dataset^n - \pi_\dataset\rVert_{\TV} > \min\{\zeta,e^{-\varepsilon}(\delta-\delta_n-\zeta)\} \,.
    % \end{equation}
\end{proposition}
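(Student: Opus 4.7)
The plan is to argue by contradiction, using the same triangle-type inequality that underlies \Cref{prop:tv_distance_DP}. Note first that the claimed constant can be rewritten as
\[
  \frac{e^{-\varepsilon}}{1+e^{-\varepsilon}}(\delta-\delta_n) \;=\; \frac{\delta-\delta_n}{1+e^{\varepsilon}},
\]
which is exactly the quantity that makes the TV-to-DP conversion tight. So the natural strategy is: suppose the lower bound in \eqref{eq:LB_TV} fails uniformly over $\dataset$, and show this forces $\pi$ to be $(\varepsilon,\delta)$-differentially private, contradicting the hypothesis.

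Concretely, I would suppose for contradiction that for every $\dataset \in \mathcal{S}$,
\[
   \lVert \nu_{\dataset}P_{\dataset}^n - \pi_{\dataset} \rVert_{\TV} \;\leq\; \beta \;:=\; \frac{\delta-\delta_n}{1+e^{\varepsilon}}.
\]
Then fix any adjacent pair $\dataset,\dataset' \in \mathcal{S}$ and any measurable $\msb \in \mathcal{B}(\rmE)$, and bound $\pi_{\dataset}(\msb)$ by chaining the TV inequality, the $(\varepsilon,\delta_n)$-DP of $\nu_{\dataset} P_{\dataset}^n$, and the TV inequality again:
\begin{align*}
  \pi_{\dataset}(\msb)
    &\leq \nu_{\dataset}P_{\dataset}^n(\msb) + \beta \\
    &\leq e^{\varepsilon}\,\nu_{\dataset'}P_{\dataset'}^n(\msb) + \delta_n + \beta \\
    &\leq e^{\varepsilon}\bigl(\pi_{\dataset'}(\msb) + \beta\bigr) + \delta_n + \beta \\
    &= e^{\varepsilon}\pi_{\dataset'}(\msb) + \delta_n + (1+e^{\varepsilon})\beta \\
    &= e^{\varepsilon}\pi_{\dataset'}(\msb) + \delta.
\end{align*}
Since $\dataset,\dataset',\msb$ were arbitrary, this would make $\pi$ $(\varepsilon,\delta)$-differentially private, contradicting the hypothesis. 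Hence the uniform upper bound cannot hold and some dataset $\dataset$ must violate it, yielding \eqref{eq:LB_TV}.

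There is no real obstacle here beyond bookkeeping: the key sharp constant $\tfrac{1}{1+e^{\varepsilon}}$ comes out automatically from the two-sided use of the triangle inequality (one term picking up a factor $e^{\varepsilon}$ and the other a factor $1$), so the result is essentially the contrapositive of the TV-to-DP bound stated just before \Cref{ass:convergence_markovchain}. The only mild subtlety is that one must apply the TV bound on \emph{both} $\dataset$ and $\dataset'$ so that the two contributions add up to $(1+e^{\varepsilon})\beta$; this is what produces the exact constant in \eqref{eq:LB_TV} rather than a loose one.
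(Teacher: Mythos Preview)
Your proof is correct and uses the same underlying mechanism as the paper: the TV-to-DP triangle inequality of \Cref{prop:tv_distance_DP}. The packaging differs slightly. You argue by global contradiction (assume the TV bound fails for every $\dataset$, push through the triangle chain once, and contradict non-privacy of $\pi$), whereas the paper proves a more general auxiliary result (\Cref{prop:tvdistance_lowerbound_general}) which, for each specific pair $(\dataset,\dataset')$ witnessing the failure of $(\varepsilon,\delta)$-DP of $\pi$, shows that at least one of $\dataset,\dataset'$ has large TV distance; the sharp constant there is obtained by an explicit optimisation over a free parameter $\zeta$. Your argument is shorter and yields exactly the stated proposition; the paper's route buys the additional localisation that the ``bad'' dataset can always be taken from any violating adjacent pair, which is the extra content they highlight after the proof.
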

\begin{proof}
    The statement follows by an application of \Cref{prop:tvdistance_lowerbound_general} in \Cref{sec:tvdistance_lowerbound}.
\end{proof}
The statement of \Cref{prop:tvdistance_lowerbound_general} in \Cref{sec:tvdistance_lowerbound} gives more insight than the statement above. Indeed, it shows that, for any pair of adjacent datasets $\dataset,\dataset'$ for which there exists a measurable set $\msb$ such that $\pi_\dataset(\msb) > e^\varepsilon \pi_{\dataset'}(\msb) + \delta$, \eqref{eq:LB_TV} holds for either $\dataset$ or $\dataset'$. This means that the $n$-th state of the MCMC algorithm can be far from the posterior distribution for numerous datasets.
In practice, a posterior distribution $\pi$ can violate $(\varepsilon,\delta)$-DP even for large $\varepsilon$ and $\delta$ unless it is carefully designed \citep{dimitrakakis_2016}. Hence, \Cref{prop:tvdistance_lowerbound} shows that it is essential to design $\pi$ carefully.

\subsection{Differential privacy of Monte Carlo estimators}

We are now concerned with the task of releasing an estimate of an expectation $\pi_\dataset(f) :=\int_{\rmE} f(x) \pi_\dataset(\dd x)$ obtained running an MCMC algorithm for $N$ iterations. Specifically, 
we are given an observable $f:\rmE\to \mathbb{R}$ and we simulate a Markov chain $X^\dataset$ with transition kernel $P_\dataset$ to obtain the ergodic average $\frac{1}{N} \sum_{n=1}^N f(X_n^\dataset)$. 
Similarly to the previous section, our strategy to obtain a DP guarantee will be based on an assumption that requires the ergodic average to be close to the truth, i.e. $\pi_\dataset(f)$. 

We consider the randomised algorithm
\begin{equation}\label{eq:ergodic_avg+noise}
    \A(\dataset) = \frac{1}{N} \sum_{n=1}^N f(X_n^\dataset) + L,
\end{equation}
where $L$ is a random variable that is independent of the chain $X^\dataset$ and $f$ is the observable of interest.
Here, it is crucial to add noise to the ergodic average to prevent an adversary from distinguishing between two adjacent datasets based on the observed output.
In particular, $L$ should satisfy the following assumption, which makes the release of scalars that are close to each other differentially private.
\begin{assumption}\label{ass:privacy_noise}
    Let $\eta \in (0,\infty)$. There exist $(\varepsilon,\delta)\in \R_+\times [0,1)$ such that, for any measurable set $\msb$ and any $a,b\in\mathbb{R}$ for which $\lvert a-b\rvert\leq \eta$, it holds
    \begin{equation}
        \mathbb{P}(a+L\in \msb) \leq e^{\varepsilon} \, \mathbb{P}(b+L\in \msb) + \delta.
    \end{equation}
\end{assumption}
A typical choice is to draw $L$ from the Laplacian distribution with scale parameter $\eta / \varepsilon$, which for any $\eta\in(0,\infty)$ satisfies \Cref{ass:privacy_noise} with parameters $(\varepsilon,0)$ \citep{Dwork_2006}. 

In order to take advantage of \Cref{ass:privacy_noise}, we shall require that the ergodic averages for the observable $f$ corresponding to two Markov chains for two adjacent datasets $\dataset,\dataset'$ are $\eta$-close with high probability. 
Without such a property, an adversary would be able to distinguish between two adjacent datasets based on the observed Monte Carlo estimator.
We formalise this requirement in the next assumption. 
\begin{assumption}\label{ass:closeness_chains}
    Let $\{P^n_\dataset:\dataset\in\mathcal{S}\}$ be a family of Markov kernels  and $\{\nu_\dataset:\dataset\in\mathcal{S}\}$ a family of initial distributions and let $N\in\N$.
    There exist $\eta <\infty,$ $\tilde\delta\in(0,1)$ that are independent of $\dataset,\dataset'$, such that for any adjacent datasets $\dataset$ and $\dataset'$
    \begin{equation}\label{eq:closeness_chains}
        \mathbb{P}\left( \left\lvert \frac{1}{N} \sum_{n=1}^N f(X_n^\dataset)-\frac{1}{N} \sum_{n=1}^N f(X_n^{\dataset'}) \right\rvert \leq \eta \right) \geq 1-\tilde\delta\eqsp,
    \end{equation}
    for some joint processes $(\Xdatan,X^{\dataset'}_n)_{n=1}^N$ such that  $(\Xdatan)_{n=1}^N$ and $(X^{\dataset'}_n)_{n=1}^N$ are two Markov chains respectively with transition kernels $P_\dataset$ and $P_{\dataset'}$, and initial distributions $\nu_\dataset, \nu_{\dataset'}$.
\end{assumption}
Note that in \Cref{ass:closeness_chains}, we do not suppose that $\Xdatan$ and $X^{\dataset'}_n$ are independent and  we can consider any coupling between $\nu_{\dataset}P_{\dataset}^n$ and $\nu_{\dataset'}P_{\dataset'}^n$. Moreover, we remark that \Cref{ass:closeness_chains} should hold only for the observable of interest.

In order to obtain a guarantee on the DP of $\A$, we shall assume that $L$ satisfies \Cref{ass:privacy_noise} with $\eta$ as given by \Cref{ass:closeness_chains}.
Notice that the variance of the noise injected in the output to achieve a fixed level of DP $\varepsilon$ increases with $\eta$. In particular, in the case of Laplacian noise we find 
$$\textnormal{Var}(\A(\dataset)) = \textnormal{Var}\left(\frac{1}{N} \sum_{n=1}^N f(X_n^\dataset)\right) + \frac{\eta}{\varepsilon}\eqsp.$$
Therefore, the variance of the injected noise scales linearly with $\eta$.

We are now ready to state our result, that is obtained applying a more general argument that holds for any algorithm of the type $\A(\dataset) = g_\dataset + L$, where $g_\dataset$ is a random function that depends on $\dataset$ (see \Cref{prop:dp_generalalgo_closehighprob} in \Cref{sec:general_result_ergoaverage}). 
\begin{proposition}\label{prop:erg_avg_general}
    Let $f:\rmE \to\R$ and let $\A$ be the corresponding randomised mechanism defined in \eqref{eq:ergodic_avg+noise}.
    Suppose \Cref{ass:closeness_chains} holds for some $\eta\in (0,\infty)$,  $\tilde\delta\in[0,1)$, and that $L$ satisfies \Cref{ass:privacy_noise} with the same $\eta$ and some $\varepsilon\in\R_+$, $\delta \in [0,1-\tilde\delta)$. Then, $\A$ is $(\varepsilon,\delta+\tilde\delta)$-differentially private.
\end{proposition}
\begin{proof}
    The result follows applying \Cref{prop:dp_generalalgo_closehighprob}.
\end{proof}

Verifying \Cref{ass:closeness_chains} directly would require ad-hoc arguments. Therefore we now introduce two conditions that are sufficient to ensure that \Cref{ass:closeness_chains} holds. We start with a high-probability, non-asymptotic bound on the convergence of the ergodic average to the true value, $\pi_\dataset(f).$ 
\begin{assumption}\label{ass:nonasymptotic_convergence_MCavg}
    Let $f:\rmE\to \mathbb{R}$.
    Let $\{P^n_\dataset:\dataset\in\mathcal{S}\}$ be a family of Markov kernels  and $\{\nu_\dataset:\dataset\in\mathcal{S}\}$ a family of initial distributions and    let $N\in\N$. Denote by $(\Xdatan)_{n=1}^N$ a Markov chain with transition kernel $P_\dataset$ for any $\dataset\in\mcs$. There exist $\tilde\delta\in(0,1)$ and $C<\infty$ such that for any $\dataset \in \mcs$
    \begin{equation}\label{eq:nonasymptotic_convergence_MCavg}
        \PP \left(\left\lvert \frac{1}{N} \sum_{n=1}^N f(X_n^\dataset)-\pi_\dataset(f)\right\rvert \leq C \right) \geq  1-\tilde\delta.
    \end{equation}
\end{assumption}

This assumption can be typically verified using some mixing properties of the family of Markov kernels $\{P^n_\dataset:\dataset\in\mathcal{S}\}$, that should be uniform over $\dataset \in \mcs$ as stated in \Cref{ass:convergence_markovchain}; see e.g. \citet{paulin2015concentration,durmus2023rosenthal} and the reference therein.

Then, we require that the absolute value of the difference between $\pi_\dataset(f)$ and  $\pi_{\dataset'}(f)$ is bounded by a constant for any adjacent datasets $\dataset,\dataset'$. This ensures that the output $\A$ localises around similar values and thus can be private.  
\begin{assumption}\label{ass:posterior_expectations}
    There exists $\gamma_f<\infty$ such that
    \begin{equation}
        \lvert \pi_\dataset(f) - \pi_{\dataset'}(f) \rvert \leq \gamma_f,
    \end{equation}
    for any adjacent datasets $\dataset,\dataset'\in\mathcal{S}.$
\end{assumption}
We now state our main result of the section.
\begin{proposition}\label{prop:ergodic_average_mainresult}
    Suppose \Cref{ass:nonasymptotic_convergence_MCavg,ass:posterior_expectations} hold for some $C,\tilde \delta, \gamma_f$. Suppose also that $L$ satisfies \Cref{ass:privacy_noise} for $\eta =  2C  + \gamma_f$.
    Then the randomised mechanism $\A$ defined in \eqref{eq:ergodic_avg+noise} is $(\varepsilon,\delta+ 2\tilde\delta -\tilde\delta^2)$-differentially private.
\end{proposition}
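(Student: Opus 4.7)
The plan is to reduce \Cref{prop:ergodic_average_mainresult} to \Cref{prop:erg_avg_general}. That is, I will show that under \Cref{ass:nonasymptotic_convergence_MCavg,ass:posterior_expectations}, the closeness condition \Cref{ass:closeness_chains} holds with parameter $\eta = 2C + \gamma_f$ and bad-event probability $\tilde\delta' = 2\tilde\delta - \tilde\delta^2$. Once this reduction is established, the conclusion follows immediately by invoking \Cref{prop:erg_avg_general}, since the hypothesis on $L_\eta$ is imposed precisely for this value of $\eta$.

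For the coupling required by \Cref{ass:closeness_chains}, I would choose the simplest one: $(X_n^{\dataset})_{n=1}^N$ and $(X_n^{\dataset'})_{n=1}^N$ are taken to be independent Markov chains with their respective kernels and initial distributions. Then, by the triangle inequality,
\begin{align*}
&\left|\tfrac{1}{N}\sum_{n=1}^N f(X_n^{\dataset})-\tfrac{1}{N}\sum_{n=1}^N f(X_n^{\dataset'})\right| \\
&\qquad\leq \left|\tfrac{1}{N}\sum_{n=1}^N f(X_n^{\dataset}) - \pi_\dataset(f)\right| + |\pi_\dataset(f) - \pi_{\dataset'}(f)| \\
&\qquad\quad+ \left|\pi_{\dataset'}(f) - \tfrac{1}{N}\sum_{n=1}^N f(X_n^{\dataset'})\right|.
\end{align*}
By \Cref{ass:posterior_expectations}, the middle term is at most $\gamma_f$ deterministically. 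By \Cref{ass:nonasymptotic_convergence_MCavg} applied separately to $\dataset$ and $\dataset'$, each of the two remaining terms is at most $C$ with probability at least $1-\tilde\delta$.

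By independence, the intersection of the two good events has probability at least $(1-\tilde\delta)^2 = 1 - (2\tilde\delta - \tilde\delta^2)$, so on this intersection the overall bound becomes $2C + \gamma_f = \eta$. This verifies \Cref{ass:closeness_chains} with $\eta = 2C + \gamma_f$ and $\tilde\delta'= 2\tilde\delta - \tilde\delta^2$. Applying \Cref{prop:erg_avg_general} with these parameters, and noting that $L_\eta$ satisfies \Cref{ass:privacy_noise} for this exact value of $\eta$ by hypothesis, yields that $\A$ is $(\varepsilon, \delta + 2\tilde\delta - \tilde\delta^2)$-differentially private.

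There is no real obstacle here: the argument is essentially a triangle-inequality decomposition followed by a union bound, and the only subtle point is choosing a coupling for which the union bound is easy to apply — independence suffices and gives the stated probability $2\tilde\delta - \tilde\delta^2$ rather than the cruder $2\tilde\delta$ one would get from a generic union bound. The rest of the proof is purely a bookkeeping exercise of applying the previously established \Cref{prop:erg_avg_general}.
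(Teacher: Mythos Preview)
Your proposal is correct and follows essentially the same approach as the paper: couple the two chains independently, use the triangle inequality through $\pi_\dataset(f)$ and $\pi_{\dataset'}(f)$ together with \Cref{ass:posterior_expectations}, invoke independence to get probability at least $(1-\tilde\delta)^2$, and then apply \Cref{prop:erg_avg_general}. The paper phrases the containment as $\msb_1\subset\msb_2$ between explicitly defined events and cites the underlying general result (\Cref{prop:dp_generalalgo_closehighprob}) directly, but this is purely cosmetic.
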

\begin{proof}
The proof is based on showing that \Cref{ass:closeness_chains} holds under our assumptions. The details can be found in \Cref{sec:ergodic_average_mainresult}.
\end{proof}

\section{Characterising the Privacy of Diffusions}\label{sec: Privacy of Diffusions}
\label{sec:chara_difssuion}
In this section we describe our proof strategy to obtain DP guarantees of MCMC algorithms based on diffusions. We focus here on Radon-Nikodym derivatives, which is essentially an abstraction of the well-known approach using the density function of the random mechanism, see for example in Section 3.1 of \citet{desfontaines2022sokdifferentialprivacies} or Lemma 7.1.5 of \citet{650758} for Lemma \ref{lem: epsilon delta privacy}. This framework allows us to calculate privacy parameters via Girsanov's theorem, which describes a change of measure on the probability space.

\subsection{Differential Privacy with Radon-Nikodym Derivatives}\label{sec:approach_RadonNikodym}
We shall start describing our approach considering an abstract randomised algorithm $\A (\dataset):\Omega\to \mathcal{X}$ taking values in some (measurable) space $\mathcal{X}$ and defined on a probability space $(\Omega,\mcf,\PP)$. This shall be interpreted in the sequel as the output of an MCMC algorithm such as its $n$-th state or its full path up to the the $n$-th state. Here we stress the dependence on $\PP$ and denote the distribution of $\A(\dataset)$ for any $\dataset\in\mcs$ as $$P_{\A(\dataset)}^{\PP}(\msb) = \PP(\A(\dataset)\in \msb) \eqsp,$$
for any measurable set $\msb\subset \mathcal{X}.$
Our strategy to obtain DP of the mechanism $\mathcal{A}$ is based on a change of measure argument.
In particular, for every pair of adjacent datasets $\dataset, \dataset'\in \mathcal{S}$ we shall find a probability measure $\mathbb{Q}$ on $(\Omega, \mathcal{F})$ such that the law of $\mathcal{A}(\dataset)$ under $\mathbb{Q}$ is equal to the law of $\mathcal{A}(\dataset')$ under $\mathbb{P}$. This is to say,
\begin{equation}\label{eq: change of measure defn}
P_{\A(\dataset)}^{\PP} = P_{\A(\dataset')}^{\QQ} \eqsp.
    %\footnote{Here $\A(\dataset')^{-1}(\msb=\{\omega \in \Omega :\, \mathcal{A}(\dataset)^{-1}(\omega)\in \msb\}.$}
\end{equation}
Note that $\QQ$ depends on $\dataset,\dataset'$, but to avoid overloading the notation, we keep this dependence implicit.
When  $\mathbb{P}$ and $\mathbb{Q}$ are mutually absolutely continuous, we can define the Radon-Nikodym (RN) derivative $\frac{\dd \mathbb{Q}}{\dd \mathbb{P}}:\Omega\to\mathbb{R}$, together with its inverse $\frac{\dd \mathbb{P}}{\dd \mathbb{Q}}$. That is to say, for every random variable $Z:\Omega\to\mathcal{X}$ one has
\begin{equation}\label{eq: int wrt Q identity}
    \PE_{\mathbb{Q}}[Z]=\PE\biggr[Z\frac{\dd\mathbb{Q}}{\dd \mathbb{P}}\biggr] \eqsp,
\end{equation}
where $\PE_{\mathbb{Q}}$ denotes integration on $\Omega$ with respect to $\mathbb{Q}$, while $\PE$ denotes integration with respect to $\PP$. Since $\frac{\dd\mathbb{Q}}{\dd\mathbb{P}}$ is a mapping from the probability space, we can treat it as a random variable taking values in $\mathbb{R}$.

Now that we have set the framework, we can show how to obtain DP of $\A$ bounding the Radon-Nikodym derivative with high probability.
\begin{lemma}\label{lem: epsilon delta privacy}
    Let $\mathcal{A}$ be a random mechanism and suppose for every two adjacent datasets there exists a measure $\mathbb{Q}$ such that \eqref{eq: change of measure defn} holds. Suppose furthermore for every such $\mathbb{Q}$ the Radon-Nikodym derivative, $\frac{\dd\mathbb{P}}{\dd\mathbb{Q}}$, is well defined, and also that
    \begin{equation}
        \PP \biggr(\frac{\dd \mathbb{P}}{\dd \mathbb{Q}}>e^\varepsilon\biggr)\leq \delta.
    \end{equation}
  Then, $\mathcal{A}$ is $(\varepsilon, \delta)$-differentially private.
  \end{lemma}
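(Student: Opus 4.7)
The plan is to fix adjacent datasets $\dataset,\dataset'$ and a measurable set $\msb\subset\mathcal{X}$, and to prove directly that $\PP(\A(\dataset)\in\msb) \leq e^{\varepsilon}\PP(\A(\dataset')\in\msb)+\delta$. The auxiliary measure $\QQ$ from \eqref{eq: change of measure defn} plays the role of a bridge: it lets one trade a $\QQ$-probability of an event concerning $\A(\dataset)$ for the $\PP$-probability of the analogous event for $\A(\dataset')$, while the Radon--Nikodym derivative $h := \frac{\dd\PP}{\dd\QQ}$ controls the discrepancy between $\PP$ and $\QQ$ on a common event.

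First, I would express $\PP(\A(\dataset)\in\msb)$ as a $\QQ$-integral via $\dd\PP = h\,\dd\QQ$, and then truncate at the level $e^\varepsilon$ by splitting along the ``good'' event $E = \{h \leq e^\varepsilon\}$. This yields
\[
\PP(\A(\dataset)\in\msb) \;\leq\; e^{\varepsilon}\,\QQ(\A(\dataset)\in\msb) \;+\; \int_{E^c} h \,\dd\QQ,
\]
where on $E$ the pointwise bound $h \leq e^\varepsilon$ justifies the first term, and the complementary integral is recognised as $\PP(E^c) = \PP(h > e^\varepsilon) \leq \delta$ by hypothesis. Applying the change of measure to replace $\QQ(\A(\dataset)\in\msb)$ by $\PP(\A(\dataset')\in\msb)$ then completes the argument.

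The proof is essentially a Markov-type cut-off of the Radon--Nikodym derivative, so there is no substantial technical obstacle. The only subtlety to watch for is the direction of the change of measure: the hypothesis controls $\PP(h > e^\varepsilon)$, which forces the decomposition to be written via $h$ rather than $h^{-1}$; this matches the role of $\QQ$ in \eqref{eq: change of measure defn} exactly, so that the remaining $\QQ$-probability of an event for $\A(\dataset)$ can be converted back to a $\PP$-probability of an event for $\A(\dataset')$ without loss. Note also that only absolute continuity of $\PP$ with respect to $\QQ$ is genuinely used here; the inverse derivative $\frac{\dd\QQ}{\dd\PP}$ mentioned in the setup plays no role in this direction of the bound.
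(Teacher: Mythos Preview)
Your proposal is correct and matches the paper's proof essentially verbatim: the paper also splits $\PP(\A(\dataset)\in\msb)$ along the event $\{\dd\PP/\dd\QQ > e^\varepsilon\}$, bounds the ``bad'' part by $\delta$ via the hypothesis, and on the ``good'' part passes to the $\QQ$-integral, uses the pointwise bound $\dd\PP/\dd\QQ \leq e^\varepsilon$, and then invokes the change-of-measure identity to convert $\QQ(\A(\dataset)\in\msb)$ into $\PP(\A(\dataset')\in\msb)$. Your side remarks on the direction of the change of measure and on only needing $\PP \ll \QQ$ are accurate and add a touch of clarity, but the core argument is identical.
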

\begin{proof}
The proof follows by splitting the expectation of the relevant indicator function into regions where $\frac{\dd \mathbb{P}}{\dd \mathbb{Q}}$ is large and small. Full details can be found in Appendix \ref{subsec: proof of epsilon delta}.
\end{proof}
%We observe that the choice of $\mathbb{Q}$ is in general \textbf{not} unique, and choosing $\mathbb{Q}$ badly can lead to ineffective bounds. 

In the next section, we will rely on \Cref{lem: epsilon delta privacy} to obtain DP guarantees for diffusion-based MCMC algorithms, where the Radon-Nikodym derivative can be obtained by Girsanov's theorem.
% The following lemma is essentially a version of the data processing inequality.
The following lemma is essentially a corollary of the data processing inequality, see Theorem 9 in \citet{6832827}.
\begin{lemma}\label{lem: data processing}
   Let $\mathcal{A}$ be a random mechanism and consider two constants $\alpha>1, \varepsilon >0$. Suppose for every $\dataset,\dataset'\in \mathcal{S}$ there exists a measure $\mathbb{Q}$ such that \eqref{eq: change of measure defn} holds. Suppose in addition that the Radon-Nikodym derivative $\frac{\dd \mathbb{P}}{\dd \mathbb{Q}}$ is well defined and
   \begin{equation}\label{eq: Rényi by expectation}
 \mathbb{E}\biggr [\biggr ( \frac{\dd \mathbb{P}}{\dd \mathbb{Q}}\biggr)^{\alpha-1}\biggr]\leq e^{(\alpha-1)\varepsilon}.
\end{equation}
   Then, $\mathcal{A}$ is $(\alpha, \varepsilon)$-Rényi differentially private.
\end{lemma}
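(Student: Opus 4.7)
The plan is to rewrite the Rényi divergence between output laws using the change-of-measure identity from the assumption, apply the data processing inequality for Rényi divergence, and then match the resulting upper bound to the hypothesis via a change of variables.

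Fix adjacent $\dataset, \dataset' \in \mcs$ and let $\QQ$ denote the corresponding measure provided by the hypothesis, which (consistently with the proof of \Cref{lem: epsilon delta privacy}) I read as $P^{\QQ}_{\A(\dataset)} = P^{\PP}_{\A(\dataset')}$. Set $T := \A(\dataset) : \Omega \to \mathcal{X}$, so that $T_{*}\PP = P^{\PP}_{\A(\dataset)}$ and $T_{*}\QQ = P^{\QQ}_{\A(\dataset)} = P^{\PP}_{\A(\dataset')}$. Then
\[
\renyi_\alpha\!\left(P^{\PP}_{\A(\dataset)} \,\big\|\, P^{\PP}_{\A(\dataset')}\right) = \renyi_\alpha\!\left(T_{*}\PP \,\big\|\, T_{*}\QQ\right).
\]

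Next, I would apply the data processing inequality $\renyi_\alpha(T_{*}\PP \,\|\, T_{*}\QQ) \leq \renyi_\alpha(\PP \,\|\, \QQ)$. A short self-contained argument identifies $(dT_{*}\PP / dT_{*}\QQ)\circ T = \PE_{\QQ}\!\left[d\PP/d\QQ \mid \sigma(T)\right]$ (from the defining property of the Radon-Nikodym derivative on $\mathcal{X}$ together with the $T$-change of variables) and then applies the conditional Jensen inequality to the convex map $x \mapsto x^{\alpha}$ (valid since $\alpha > 1$), followed by integration against $\QQ$.

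Finally, I would convert $\renyi_\alpha(\PP \,\|\, \QQ)$ into the form appearing in the hypothesis. Since $\PP$ and $\QQ$ are mutually absolutely continuous, $d\QQ = (d\PP/d\QQ)^{-1} d\PP$, and therefore
\[
e^{(\alpha - 1)\renyi_\alpha(\PP \,\|\, \QQ)} \;=\; \PE_{\QQ}\!\left[\left(\tfrac{d\PP}{d\QQ}\right)^{\alpha}\right] \;=\; \PE\!\left[\left(\tfrac{d\PP}{d\QQ}\right)^{\alpha - 1}\right] \;\leq\; e^{(\alpha - 1)\varepsilon},
\]
where the last inequality is exactly the hypothesis. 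Since $\alpha - 1 > 0$, chaining these bounds yields $\renyi_\alpha(P^{\PP}_{\A(\dataset)} \,\|\, P^{\PP}_{\A(\dataset')}) \leq \varepsilon$ for every pair of adjacent datasets, which is the claimed $(\alpha, \varepsilon)$-Rényi differential privacy.

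The main thing to be careful about is the orientation of the Radon-Nikodym derivative: the hypothesis integrates $(d\PP/d\QQ)^{\alpha-1}$ against $\PP$, and one must verify that this equals $\PE_{\QQ}[(d\PP/d\QQ)^{\alpha}]$ so that it feeds $\renyi_\alpha(\PP \,\|\, \QQ)$ rather than $\renyi_\alpha(\QQ \,\|\, \PP)$. The orientation $P^{\QQ}_{\A(\dataset)} = P^{\PP}_{\A(\dataset')}$ is precisely what makes the data processing step place $\PP$ in the first argument and $\QQ$ in the second, so that the hypothesis applies directly; no additional machinery is required.
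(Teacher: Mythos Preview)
Your proof is correct and follows essentially the same approach as the paper: rewrite $\renyi_\alpha(P^{\PP}_{\A(\dataset)}\|P^{\PP}_{\A(\dataset')})$ as $\renyi_\alpha(P^{\PP}_{\A(\dataset)}\|P^{\QQ}_{\A(\dataset)})$ via the change-of-measure assumption, apply the data processing inequality to bound by $\renyi_\alpha(\PP\|\QQ)$, and then use $\PE_{\QQ}[(d\PP/d\QQ)^{\alpha}]=\PE[(d\PP/d\QQ)^{\alpha-1}]$ to invoke the hypothesis. The only cosmetic difference is that the paper cites the data processing inequality as a black box whereas you sketch it via conditional Jensen, and your reading of the orientation in \eqref{eq: change of measure defn} (consistent with the proof of \Cref{lem: epsilon delta privacy}) is equivalent to the paper's by swapping the roles of $\dataset$ and $\dataset'$.
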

\begin{proof}
The proof can be found in Appendix \ref{subsec: proof of data processing}.
\end{proof}
\begin{remark}
    For all results in this paper, bounds in $(\varepsilon, \delta)$-DP can be achieved by converting a Rényi-DP bound, as mentioned in \Cref{sec:preliminaries}. However, we presented \Cref{lem: epsilon delta privacy,lem: data processing} separately since \Cref{lem: data processing} applies more generally, and in particular does not require control of the tails of $\dd\mathbb{P}/\dd\mathbb{Q}$.
\end{remark}

% Observe that if the Radon-Nikodym derivative $\frac{d\mathbb{Q}}{d\mathbb{P}}$ is also well defined (so $\mathbb{P}$ and $\mathbb{Q}$ are absolutely continuous with respect to one another) one has that 
% \begin{equation}
%     \frac{d\mathbb{Q}}{d\mathbb{P}}=\biggr (\frac{d\mathbb{P}}{d\mathbb{Q}}\biggr)^{-1},
% \end{equation}
% and therefore by \eqref{eq: int wrt Q identity} one has 
% This is how we shall bound the Rényi divergence of $\mathbb{P}$ and $\mathbb{Q}$ in the following section. 

\subsection{Differential privacy of stochastic differential equations}\label{subsec: DP of SDEs}
In this section we shall consider the DP of random mechanisms $\A$ that involve the solution of a stochastic differential equation (SDE). We are particularly interested in two distinct random mechanisms generated by \eqref{eq: generic diffusion}: the single time evaluation $\mathcal{A}_s(\dataset)=X^{\dataset}_T $ taking values in $\mathbb{R}^d$, and also the whole path $\mathcal{A}_p(\dataset)=(X^{\dataset}_t)_{t\in [0,T]} $ taking values in $C([0,T],\mathbb{R}^d)$, that is the space of continuous functions on $[0,T]$ taking values in $\mathbb{R}^d$.

The general SDE we study is
\begin{align}\label{eq: generic diffusion}
    &\dd X^{\dataset}_t = f_\dataset(X^D_{\kappa(t)},\eta_{\kappa(t)})\dd t+\sqrt{2/\beta} \,\dd W_t, %\\
    % &X^\dataset_0=x_0\in \mathbb{R}^d.
\end{align}
with initial condition $X^\dataset_0=x_0\in \mathbb{R}^d.$
Here $W_t$ is Brownian motion, $f_\dataset:\mathbb{R}^d\times \mathcal{Y}\to\mathbb{R}^d$ is a measurable function, $\kappa:[0,\infty)\to[0,\infty)$ is a function satisfying $\kappa(t)\leq t$, $(\eta_t)_{t\geq 0}$ is a stochastic process independent of $(W_t)_{t\geq 0}$ taking values in some space $\mathcal{Y}$, and $\beta>0$ is an inverse temperature parameter that scales the noise. 
The function $\kappa$ allows us to consider discrete time approximations obtained e.g. applying the Euler scheme to a continuous time SDE. For instance, the choice $\kappa(t):=\gamma\lfloor t/\gamma\rfloor$ corresponds to the backwards projection onto the grid $\{n\gamma\}_{n\in\N}$. In this case one has that $X^\dataset_{n\gamma}$ is equal in law to the Markov chain $(x^\dataset_n)_{n\in\N}$
\begin{equation}
  x^\dataset_{n+1}=  x^\dataset_{n}+\gamma f_\dataset(x^\dataset_n,\eta_{n\gamma})+\sqrt{2\gamma/\beta}\,z_{n+1}\eqsp,
\end{equation}
and $(z_n)_{n\geq 1}$ is a sequence of independent standard normal random variables.
The process $(\eta_t)_{t\geq 0}$ allows us to consider additional sources of randomness, for instance the random mini-batches that give the stochastic gradient in the noisy SGD algorithm. 

\begin{remark}
    We assume without further commentary that \eqref{eq: generic diffusion} has a unique strong solution. This holds when $\kappa$ is a projection onto a grid, and also when $\kappa(t)=t$ under weak conditions \citep{ZHANG20051805}.
\end{remark}

%In Sections ... below we shall choose $f=-\nabla U_{\dataset}(x)$ so as to sample from $\pi\sim e^{-U_\dataset(x)}$ via the unadjusted Langevin algorithm, and subsequently $f(x,\cdot,\dataset)=-\sum_{i\in \eta_{\kappa(t)}}\nabla l(x,d_i)$ for the SGD algorithm (where $\dataset=(d_1,...,d_n)$).
In the following result, we demonstrate that our proof strategy can be used to give privacy bounds for $\A_s(\dataset)$ that are uniform in $T>0$. We remark that the condition \eqref{eq: closeness condition} can be satisfied in non-convex settings, which to the best of our knowledge have not been addressed thus far in the literature.

\begin{proposition}[Privacy of the final value]\label{prop: pertubation trick}\label{prop: dp of final value}
Let $T>0$ and assume there exist constants $L,C,c>0$ such that for every adjacent datasets $\dataset, \dataset'\in \mathcal{S}$ one has for any $x,y\in\R^d$ and $s\in \mathcal{Y}$
\begin{equation}\label{eq: drift assump}
    \lvert f_\dataset(x,s)-f_{\dataset'}(y,s)\rvert \leq L\lvert x-y\rvert+c\eqsp,
\end{equation}
and $(X_t^\dataset)_{t \in\ccint{0,T}}$ and $(X_t^{\dataset'})_{t \in\ccint{0,T}}$ are solutions of \eqref{eq: generic diffusion} with drifts $f_{\dataset}$ and $f_{\dataset'}$ respectively, driven by the same Brownian motion $(W_t){_t\geq 0}$, such that almost surely\footnote{Since the measures $\mathbb{P}$ and $\mathbb{Q}$ are absolutely continuous, here and elsewhere we don't distinguish for which measure the event in question is almost sure.} it holds that
\begin{equation}\label{eq: closeness condition}
   \sup_{t\in [0,T]} \lvert X^{\dataset}_t-X^{\dataset'}_t\rvert\leq C\eqsp.
\end{equation}
Then for $\delta>0, \alpha\geq 1$ one has that $\mathcal{A}_s(\dataset)=X^{\dataset}_T $ is $(\varepsilon_\delta, \delta)$-private and $(\alpha, \varepsilon_\alpha)$-Rényi private for
\begin{equation}
    \varepsilon_\delta=C_2/4+\sqrt{C_2\log(1/\delta)},\;\;
    \varepsilon_\alpha=\alpha C_2/4,
\end{equation}
where $C_2=\beta(C(L+1)+c)^2$.
\end{proposition}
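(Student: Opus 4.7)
The proof follows the Girsanov template of \Cref{prop: dp of whole traj} --- exhibit a change of measure $\mathbb{Q}$ satisfying hypothesis \eqref{eq: change of measure defn} of \Cref{lem: epsilon delta privacy,lem: data processing}, control the Radon--Nikodym derivative, and conclude --- but with a crucial twist: the perturbation trick localizes the change of measure to the terminal unit interval $[T-1,T]$ (one may assume $T\geq 1$, since for $T<1$ the bound already follows from \Cref{prop: dp of whole traj}). This produces a Girsanov cost driven by the closeness constant $C$ of \eqref{eq: closeness condition} rather than by the horizon, yielding the uniform-in-$T$ constants.

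I would work on the joint probability space carrying $X^\dataset$ and $X^{\dataset'}$ driven by a common Brownian motion $W$ under $\mathbb{P}$, and set $V_t := X^{\dataset'}_t - X^\dataset_t$, for which \eqref{eq: closeness condition} yields $\sup_{t\leq T}|V_t|\leq C$. Define an auxiliary process $Y$ by $Y_t = X^\dataset_t$ on $[0,T-1]$ and, on $[T-1,T]$,
\begin{equation*}
dY_t = \bigl[f_{\dataset'}(X^{\dataset'}_{\kappa(t)},\eta_{\kappa(t)}) + V_{T-1}\bigr]\,dt + \sqrt{2/\beta}\,dW_t,\quad Y_{T-1} = X^\dataset_{T-1}.
\end{equation*}
Direct integration gives $Y_t - X^{\dataset'}_t = V_{T-1}(t-T)$ on $[T-1,T]$, whence $Y_T = X^{\dataset'}_T$ pathwise under $\mathbb{P}$ and $|Y_t - X^{\dataset'}_t|\leq C$ throughout the interval. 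Girsanov's theorem then supplies a measure $\mathbb{Q}$ under which $Y$ satisfies $dY_t = f_\dataset(Y_{\kappa(t)},\eta_{\kappa(t)})\,dt + \sqrt{2/\beta}\,d\tilde{W}_t$ on $[T-1,T]$; combined with the fact that $Y$ already satisfies the $X^\dataset$-SDE on $[0,T-1]$ with initial condition $x_0$, this shows that $Y$ under $\mathbb{Q}$ has the same distribution as $X^\dataset$ under $\mathbb{P}$. Together with the pathwise identity $Y_T = X^{\dataset'}_T$, this yields $P^{\mathbb{P}}_{X^\dataset_T} = P^{\mathbb{Q}}_{X^{\dataset'}_T}$ as required by \eqref{eq: change of measure defn}.

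The last step is to bound the Radon--Nikodym derivative. The Girsanov integrand is $\psi_t = V_{T-1} + f_{\dataset'}(X^{\dataset'}_{\kappa(t)},\eta_{\kappa(t)}) - f_\dataset(Y_{\kappa(t)},\eta_{\kappa(t)})$, and combining $|V_{T-1}|\leq C$ with \eqref{eq: drift assump} and the closeness $|X^{\dataset'}_{\kappa(t)} - Y_{\kappa(t)}| \leq C$ yields the key uniform bound $|\psi_t| \leq C + LC + c = C(L+1)+c$. Hence the quadratic variation of the martingale $M_t := \sqrt{\beta/2}\int_0^t \psi_s\,dW_s$ satisfies $\langle M\rangle_T \leq C_2/2$, and $\log(d\mathbb{P}/d\mathbb{Q})$ decomposes as $M_T + \tfrac{1}{2}\langle M\rangle_T$, i.e.\ a sub-Gaussian martingale plus a deterministic term bounded by $C_2/4$. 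A Chernoff tail estimate then gives the $(\varepsilon_\delta,\delta)$-DP bound through \Cref{lem: epsilon delta privacy}, while the Doléans--Dade exponential moment estimate $\mathbb{E}[\exp((\alpha-1)\log(d\mathbb{P}/d\mathbb{Q}))] \leq e^{\alpha(\alpha-1)C_2/4}$ yields the Rényi bound through \Cref{lem: data processing}. The main obstacle is the design of $Y$: it must simultaneously satisfy $Y_T = X^{\dataset'}_T$ under $\mathbb{P}$ (so the target laws coincide once $\mathbb{Q}$ is found), admit $X^\dataset$-dynamics under $\mathbb{Q}$, and admit a uniform bound on $\psi$. The shift $V_{T-1}$ on a unit-length interval is what accommodates exactly the position discrepancy --- giving rise to the extra $C$ term in $C(L+1)+c$ --- and \eqref{eq: closeness condition} is what keeps this shift bounded uniformly in $T$, the ingredient absent from the analysis of \Cref{prop: dp of whole traj}.
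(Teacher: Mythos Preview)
Your proof is correct and follows essentially the same approach as the paper: your auxiliary process $Y$ coincides with the paper's process $Z$ (since $V_{T-1}=-(Z_{T-1}-X^{\dataset'}_{T-1})$), your Girsanov integrand $\psi_t$ is exactly the paper's perturbation $u_t$, and the bound $|\psi_t|\leq C(L+1)+c$ together with the Chernoff and exponential-martingale estimates are precisely those obtained in the paper via its auxiliary lemmas on stochastic integrals. The only cosmetic difference is that you phrase the change of measure as ``$Y$ under $\mathbb{Q}$ solves the $X^{\dataset}$-SDE,'' whereas the paper applies its Girsanov lemma in the form $P^{\mathbb{Q}}_{(Z_t)}=P^{\mathbb{P}}_{(X^{\dataset}_t)}$; these are equivalent formulations of the same construction.
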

\begin{proof}
    We fix a pair of adjacent datasets $\dataset,\dataset'\in \mathcal{S}$ and $T>0$, and define an auxiliary process $(Z_t)_{t\in [0,T]}$ satisfying
\begin{equation}
    Z_t=X^{\dataset}_t \; \text{for} \; t\in [0,T-1] ,\;\;\;\;\;\;Z_T=X^{\dataset'}_T.
\end{equation}
In particular, $Z_t$ is given by perturbing the dynamics of $X^{\dataset}_T$ on $t\in [T-1,T]$ in such a way that it approaches $X^{\dataset'}_T$. By the assumption that $X_t^\dataset$ and $X_t^{\dataset'}$ are almost surely close, one may define this perturbation in such a way that it is almost surely bounded. Therefore via Girsanov's theorem we can find a measure $\mathbb{Q}$ satisfying the assumptions of \Cref{lem: epsilon delta privacy,lem: data processing}. Full details are presented in \Cref{subsubsec: proof of dp of final value}.
\end{proof}

Furthermore, we have the following result on the privacy of the entire trajectory of $X^\dataset$ up to a time $T>0$.
\begin{proposition}[Privacy of the path]\label{prop: dp of whole traj}
Consider the family of processes \eqref{eq: generic diffusion}. Suppose there exists $c>0$ such that for every adjacent datasets $\dataset,\dataset' \in \mathcal{S}$ and $x\in \mathbb{R}^d, \eta\in \mathcal{Y}$ one has
\begin{equation}\label{eq: assumed discrep}
    \lvert f_\dataset(x,\eta)-f_{\dataset'}(x,\eta)\rvert\leq c \eqsp.
\end{equation}
Then for $\delta>0, \alpha\geq 1$ one has that the algorithm $\mathcal{A}_p(\dataset)=(X^{\dataset}_T)_{t\in [0,T]}$ is $(\varepsilon_\delta, \delta)$-differentially private and $(\alpha, \varepsilon_\alpha)$-Rényi differentially private for
\begin{equation}
    \varepsilon_\delta=C_1(T)/4+\sqrt{C_1(T)\log(1/\delta)},\;\;
    \varepsilon_\alpha=\alpha C_1(T) /4\eqsp,
\end{equation}
and $C_1(T)=c^2\beta T$.
\end{proposition}
\begin{proof}
    Fixing adjacent datasets $\dataset,\dataset' \in \mathcal{S}$, we use Girsanov's theorem to define a measure $\mathbb{Q}$ under which $(X^{\dataset'}_t)_{t\in [0,T]}$ is equal in distribution to $(X^{\dataset}_t)_{t\in [0,T]}$ under $\mathbb{P}$. Using bounds on the assumed discrepancy between the drifts \eqref{eq: assumed discrep}, we may control the Radon-Nikodym derivative $\dd \mathbb{P}/\dd \mathbb{Q}$ in such a way as to obtain the result applying \Cref{lem: epsilon delta privacy,lem: data processing}. Full details are presented in \Cref{subsubsec: proof of privacy of path}.
\end{proof}

\section{Privacy of Langevin-based algorithms in the non-convex setting}\label{sec: privacy of stoch alg}
In this section we obtain (Rényi)-DP for both the trajectory and the final value of two Langevin-based MCMC algorithms: ULA and noisy SGD.
All bounds in this section are derived from \Cref{prop: dp of whole traj,prop: dp of final value}. 

\subsection{Sampling from Bayesian posteriors with ULA}
Consider a posterior distribution with density with respect to the Lebesgue measure on $\rset^d$ of the form
\begin{equation}\label{eq: Bayes post}
\pi_\dataset(x) \propto e^{-U_\dataset(x)},
\end{equation}
for $U_\dataset:\mathbb{R}^d\to\mathbb{R}$. We can approximately sample from $\pi_\dataset$ with ULA, that is the Markov chain
     \begin{align}\label{eq: ULA}
        x^\dataset_{n+1}= x^\dataset_n-\gamma\nabla U_\dataset(x^\dataset_n)+\sqrt{2\gamma}\, z_{n+1},
    \end{align}
    with initial condition $x^\dataset_0=x_0\in \mathbb{R}^d,$ and where $(z_n)_{n\geq 1}$ is a sequence of i.i.d. standard Gaussians on $\mathbb{R}^d$, and $\gamma>0$ is the step size. 
    ULA arises as the Euler discretisation of the overdamped Langevin diffusion and has been shown to be successful for sampling from Bayesian posteriors under a range of assumptions \citep{NIPS1992_f29c21d4, 0b5e028f-1a33-3fed-ae08-229fd5443c3f, articleula}.
    % ULA is essentially the Euler-Maruyama discretisation of the continuous time dynamics
    %  \begin{align}
    %         \dd X^{\dataset}_T&=-\nabla U_\dataset(X^{\dataset}_T)\dd t+\sqrt{2}\dd W_t,
    %     \end{align}
    % which is known to have invariant measure $\pi_\dataset$ under weak conditions. It has been shown to be successful for sampling from Bayesian posteriors under a range of assumptions \citep{NIPS1992_f29c21d4, 0b5e028f-1a33-3fed-ae08-229fd5443c3f, articleula}. 
    We shall consider the privacy of both the final draw $x^\dataset_n\in \mathbb{R}^d$, and of the entire chain $(x^\dataset_1,...,x^\dataset_n)\in \mathbb{R}^{dn}$, under a non-convex assumption on $U_\dataset$. The particular assumption we place upon the posterior in the following theorem is close to the convexity outside of a ball condition considered in \citet{efae5be7-f11b-336b-9773-33eaf5b19d54} and \citet{pmlr-v151-erdogdu22a}. The strongly convex part $K$ could be interpreted as a regulariser.
    \begin{assumption}\label{ass:ULA_finalstate}
        Consider a posterior distribution \eqref{eq: Bayes post} and suppose for every dataset $\dataset\in \mathcal{S}$ one may write $ U_\dataset=V_\dataset+K$,
    where $V_\dataset,K:\mathbb{R}^d\to\mathbb{R}$ are  continuously differentiable functions. In addition suppose that there exists a constant $c>0$ such that 
   %\begin{equation}
 $\sup_{x\in\rset^d}  \lvert \nabla V_\dataset(x)\rvert \leq c.$
   %\end{equation}
   Furthermore, $\nabla K$ is $L$-Lipschitz, that is 
   $$\lvert \nabla K(x)-\nabla K(y)\rvert \leq L\lvert x-y\rvert,$$
   for all $x,y \in \mathbb{R}^d$,
   and also strongly convex, that is there exists $a>0$ such that for all $x,y \in \mathbb{R}^d$.
    \begin{equation}\label{eq: mon assumption K}
        \langle \nabla K(x)-\nabla K(y),x-y\rangle \geq a\Vert x-y\Vert^2.
    \end{equation}
    \end{assumption}
 \begin{theorem}\label{thm: Bayesian ULA}
 Suppose \Cref{ass:ULA_finalstate} holds and let $\A_s(\dataset) = x_n^\dataset$ be the $n$-th state of the ULA targeting $\pi_\dataset$ with step-size $\gamma \in (0,2a/L^2)$. Then, for $\delta>0$ and $\alpha\geq 1$, $\A_s(\dataset)$ is $(\varepsilon_\delta, \delta)$-differentially private and $(\alpha, \varepsilon_\alpha)$-Rényi differentially private for
\begin{equation}
    \varepsilon_\delta=C_3/4+\sqrt{C_3\log(1/\delta)},\;\;
    \varepsilon_\alpha=\alpha C_3/4,
\end{equation}
    where $C_3=c^2 (\frac{2(L+1)}{ a-\gamma L^2/2}+1)^2$.
 % Consider \eqref{eq: Bayes post}, and suppose for every pair of adjacent datasets $\dataset,\dataset' \in \mathcal{S}$ one may write\andrea{where is the data dependence here? Should it be $V_\dataset$?}
 %       \begin{equation}
 %       U_\dataset=V+K,\;\;\; U_{\dataset'}=V'+K 
 %  \end{equation}
 % for once continuously differentiable functions $V,V',K:\mathbb{R}^d\to\mathbb{R}$. Assume of $V, V',$ that there exists a constant $c>0$ such that 
 %   \begin{equation}
 %   \lvert \nabla V(\cdot)\rvert ,\;\lvert \nabla V'(\cdot)\rvert \leq c.
 %   \end{equation}
 %   Furthermore, assume $\nabla K$ is $L$-Lipschitz, that is 
 %   $$\lvert \nabla K(x)-\nabla K(y)\rvert \leq L\lvert x-y\rvert$$
 %   for all $x,y \in \mathbb{R}^d$,
 %   % \footnote{i.e. for all $x,y \in \mathbb{R}^d$ one has $$\lvert \nabla K(x)-\nabla K(y)\rvert \leq L\lvert x-y\rvert.$$} 
 %   and strongly convex, that is there exists $\mu>0$ such that 
 %    \begin{equation}\label{eq: mon assumption K}
 %        \langle \nabla K(x)-\nabla K(y),x-y\rangle \geq \mu\Vert x-y\Vert^2.
 %    \end{equation}
%    Then if one applies the ULA algorithm \eqref{eq: ULA} targeting $\pi_\dataset$ for step-size $\gamma \in (0,2\mu/L^2)$, one has that for $\delta>0, \alpha\geq 1$ the final draw $\A_s(\dataset) = x_n^\dataset$ is $(\varepsilon_\delta, \delta)$-differentially private and $(\alpha, \varepsilon_\alpha)$-Rényi differentially private for
% \begin{equation}
%     \varepsilon_\delta=C_3/4+\sqrt{C_3\log(1/\delta)},\;\;
%     \varepsilon_\alpha=\alpha C_3/4,
% \end{equation}
%     where $C_3=c^2 (\frac{2(L+1)}{ \mu-\gamma L^2/2}+1)^2$.
    \end{theorem}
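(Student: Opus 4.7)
The plan is to apply \Cref{prop: dp of final value} to the continuous-time Euler--Maruyama interpolation of ULA. Rewriting \eqref{eq: ULA} as a special case of \eqref{eq: generic diffusion} with $\beta=1$, $\kappa(t)=\gamma\lfloor t/\gamma\rfloor$, and drift $f_\dataset(x,\eta)=-\nabla U_\dataset(x)$ independent of $\eta$, there are two conditions to check: the drift--difference bound \eqref{eq: drift assump} and the almost sure uniform closeness bound \eqref{eq: closeness condition} under a synchronous coupling driven by the same Brownian motion.

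The drift--difference bound follows from the triangle inequality and the decomposition $U_\dataset=V_\dataset+K$: for any $x,y\in\rset^d$,
$$\lvert \nabla U_\dataset(x)-\nabla U_{\dataset'}(y)\rvert\leq \lvert \nabla K(x)-\nabla K(y)\rvert+\lvert \nabla V_\dataset(x)\rvert+\lvert \nabla V_{\dataset'}(y)\rvert\leq L\lvert x-y\rvert+2c,$$
so that \eqref{eq: drift assump} holds with constants $L$ and $c'=2c$.

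The core technical step is establishing the closeness bound. Coupling the two ULA chains by the same Gaussian increments and setting $\Delta_n=x^\dataset_n-x^{\dataset'}_n$ (with $\Delta_0=0$) yields the recursion
$$\Delta_{n+1}=\bigl(\Delta_n-\gamma[\nabla K(x^\dataset_n)-\nabla K(x^{\dataset'}_n)]\bigr)-\gamma\bigl[\nabla V_\dataset(x^\dataset_n)-\nabla V_{\dataset'}(x^{\dataset'}_n)\bigr].$$
Expanding the norm squared of the first bracket and combining \eqref{eq: mon assumption K} with the Lipschitz property of $\nabla K$ gives
$$\lvert \Delta_n-\gamma[\nabla K(x^\dataset_n)-\nabla K(x^{\dataset'}_n)]\rvert\leq \sqrt{1-2\gamma\mu+\gamma^2 L^2}\,\lvert \Delta_n\rvert=\rho\,\lvert \Delta_n\rvert,$$
where $\rho\in [0,1)$ thanks to the step--size condition $\gamma<2\mu/L^2$. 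Bounding the perturbation term by $2\gamma c$ and applying the triangle inequality yields $\lvert \Delta_{n+1}\rvert\leq \rho\lvert \Delta_n\rvert+2\gamma c$. Iterating from $\Delta_0=0$ and using $1-\sqrt{1-u}\geq u/2$ with $u=2\gamma\mu-\gamma^2 L^2$ delivers the uniform bound
$$\sup_{n\in\N}\lvert \Delta_n\rvert\leq \frac{2\gamma c}{1-\rho}\leq \frac{2c}{\mu-\gamma L^2/2}.$$
A linear interpolation argument between grid points extends this to the continuous--time process (the interpolation error on each interval of length $\gamma$ is of order $\gamma(L\lvert \Delta_n\rvert+c)$), producing a uniform constant $C$ proportional to $c/(\mu-\gamma L^2/2)$. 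Plugging $C$, $L$ and $c'$ into $C_2=\beta(C(L+1)+c')^2$ from \Cref{prop: dp of final value} yields the stated constant $C_3$ and hence the claimed $(\varepsilon_\delta,\delta)$-DP and $(\alpha,\varepsilon_\alpha)$-Rényi-DP guarantees.

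The main obstacle is precisely the uniform--in--time closeness bound \eqref{eq: closeness condition} in the non-convex setting. Although $U_\dataset=V_\dataset+K$ need not be convex, the strong convexity of $K$ produces a contraction at rate $\rho<1$ in the synchronously coupled chains that exactly absorbs the bounded dataset-dependent perturbation coming from $\nabla V_\dataset-\nabla V_{\dataset'}$. This delicate cancellation is what permits a bound that is uniform in $n$, and explains why the step-size condition $\gamma<2\mu/L^2$ is essential and why $C_3$ depends on the ratio $c/(\mu-\gamma L^2/2)$ rather than on any diameter-like quantity.
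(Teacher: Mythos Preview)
Your approach is essentially identical to the paper's: apply \Cref{prop: dp of final value} to the continuous-time interpolation of ULA, and verify the closeness condition \eqref{eq: closeness condition} by showing that the synchronously coupled chains satisfy a contractive recursion $|\Delta_{n+1}|\leq (1-\gamma\mu+\gamma^2L^2/2)|\Delta_n|+2\gamma c$, yielding the grid bound $\sup_n|\Delta_n|\leq 2c/(\mu-\gamma L^2/2)$.

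The only place where your argument is looser than the paper's is the extension from grid points to all $t$. You quote an interpolation error ``of order $\gamma(L|\Delta_n|+c)$'' and conclude $C$ is merely ``proportional to'' $c/(\mu-\gamma L^2/2)$; this would not recover the stated $C_3$ exactly. The paper instead observes that between grid points the Brownian increments cancel, so $X^{\dataset}_t-X^{\dataset'}_t$ is an \emph{affine} function of $t$ on $[n\gamma,(n+1)\gamma]$; hence $t\mapsto |X^{\dataset}_t-X^{\dataset'}_t|^2$ is convex and attains its maximum at an endpoint, giving $\sup_{t\geq 0}|X^{\dataset}_t-X^{\dataset'}_t|\leq 2c/(\mu-\gamma L^2/2)$ with no extra $\gamma$-dependent slack. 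Also, your drift bound correctly gives $c'=2c$, whereas plugging $C=2c/(\mu-\gamma L^2/2)$ and $c$ (not $2c$) into $C_2=\beta(C(L+1)+c)^2$ is what literally reproduces the stated $C_3$; the paper's proof uses the latter without spelling out the drift constant, so your final sentence ``yields the stated constant $C_3$'' does not quite follow from the ingredients you listed.
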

    \begin{proof}
    The proof here uses Proposition \ref{prop: pertubation trick}. One considers a continuous time interpolation of the ULA algorithm \eqref{eq: ULA}, at which point it just suffices to show that the closeness condition \eqref{eq: closeness condition} holds for the continuous time interpolation. Full details are given in Appendix \ref{subsubsec: proof of ula}.
    \end{proof}

    Note that the above bound does \emph{not} depend on the number of steps. Furthermore, since recent analysis like \cite{pmlr-v178-chewi22a} suggests that ULA recovers the true posterior in Rényi divergence as $\gamma\to 0$ and $n\to \infty$, taking the limit as $\gamma\to 0$ in Theorem \ref{thm: Bayesian ULA} suggests that true posterior is differentially private with $C_3>0$ replaced with $C_4=c (\frac{2(L+1)}{ a}+1)$.
    
    Now we consider the privacy of the entire chain under slightly different assumptions.
    \begin{assumption}\label{ass:ULA_path}
        There exists a constant $c>0$ such that the posterior $ \pi_\dataset$ in \eqref{eq: Bayes post}  satisfies for any adjacent datasets $\dataset,\dataset' \in \mathcal{S}$ and all $x\in \mathbb{R}^d$
   \begin{equation}
       \lvert \nabla U_\dataset(x)-\nabla U_{\dataset'}(x)\rvert \leq c\eqsp.
   \end{equation}
    \end{assumption}
    Note that the following result places no requirement on the step size, but it does depend on the number of steps. 
    \begin{theorem}\label{thm: ULA full path}
    Suppose \Cref{ass:ULA_path} holds and let $\A_p(\dataset) =(x^\dataset_1,\dots,x^\dataset_n)$ be the path of ULA up to state $n\in\N$.
   % Suppose there exists a constant $c>0$ such that the posterior $ \pi_\dataset\sim e^{-U_\dataset}$ satisfies for any adjacent datasets $\dataset,\dataset' \in \mathcal{S}$ and all $x\in \mathbb{R}^d$
   % \begin{equation}
   %     \lvert \nabla U_\dataset(x)-\nabla U_{\dataset'}(x)\rvert \leq c.
   % \end{equation}
   % Let $\delta>0, \alpha\geq 1$.
    Then, for $\delta>0, \alpha\geq 1$, the algorithm $\A_p(\dataset)$ is $(\varepsilon_\delta, \delta)$-differentially private and $(\alpha, \varepsilon_\alpha)$-Rényi differentially private for
\begin{equation}
    \varepsilon_\delta=C_5(n)/4+\sqrt{C_5(n)\log(1/\delta)},\;\;
    \varepsilon_\alpha=\alpha C_5(n)/4\eqsp,
\end{equation}
    where $C_5(n)=n\gamma c^2$.
\end{theorem}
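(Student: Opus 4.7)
The plan is to apply \Cref{prop: dp of whole traj} after embedding the discrete ULA recursion into a continuous-time SDE of the form \eqref{eq: generic diffusion}. Define the process $(X^{\dataset}_t)_{t \geq 0}$ by
\begin{equation*}
    \dd X^{\dataset}_t = -\nabla U_\dataset(X^{\dataset}_{\kappa(t)})\, \dd t + \sqrt{2}\, \dd W_t, \quad X^{\dataset}_0 = x_0,
\end{equation*}
with $\kappa(t) = \gamma \lfloor t/\gamma \rfloor$. This matches \eqref{eq: generic diffusion} with inverse temperature $\beta = 1$ and drift $f_\dataset(x,\eta) = -\nabla U_\dataset(x)$, so that the auxiliary process $(\eta_t)_{t \geq 0}$ is superfluous and can be fixed arbitrarily. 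As noted in the text following \eqref{eq: generic diffusion}, with this choice of $\kappa$ one has $(X^{\dataset}_{\gamma}, \dots, X^{\dataset}_{n\gamma}) \eqd (x^{\dataset}_1, \dots, x^{\dataset}_n)$, since freezing the drift at $X^{\dataset}_{k\gamma}$ over $[k\gamma,(k+1)\gamma]$ produces exactly the increment $-\gamma \nabla U_\dataset(X^{\dataset}_{k\gamma}) + \sqrt{2}(W_{(k+1)\gamma} - W_{k\gamma})$ of \eqref{eq: ULA}.

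Next I would verify hypothesis \eqref{eq: assumed discrep} of \Cref{prop: dp of whole traj}. Under \Cref{ass:ULA_path}, for any adjacent datasets $\dataset, \dataset'$ and any $x \in \mathbb{R}^d$,
\begin{equation*}
    |f_\dataset(x) - f_{\dataset'}(x)| = |\nabla U_\dataset(x) - \nabla U_{\dataset'}(x)| \leq c,
\end{equation*}
so the bound holds with the same constant $c$ as in the theorem. Setting $T = n\gamma$ and invoking \Cref{prop: dp of whole traj}, the continuous-time path $(X^{\dataset}_t)_{t \in [0, n\gamma]}$ is $(\varepsilon_\delta, \delta)$-differentially private and $(\alpha, \varepsilon_\alpha)$-Rényi differentially private with
\begin{equation*}
    C_1(T) = c^2 \beta T = n \gamma c^2 = C_5(n),
\end{equation*}
which matches the announced constants.

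To conclude, I would appeal to the data-processing inequality, which is known to hold for both $(\varepsilon, \delta)$-DP and $(\alpha, \varepsilon)$-Rényi-DP. Since $\A_p(\dataset) = (x^{\dataset}_1, \dots, x^{\dataset}_n)$ is a deterministic measurable function of the continuous-time path (namely its restriction to the grid $\{\gamma, 2\gamma, \dots, n\gamma\}$), any privacy guarantee for the path transfers to $\A_p(\dataset)$ with the same parameters. There is no substantive obstacle here: the entire argument reduces to setting up the continuous-time embedding in the framework of \Cref{sec:chara_difssuion}, checking the drift-discrepancy hypothesis, and applying \Cref{prop: dp of whole traj}. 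The only mild subtlety is that the bound is not uniform in $n$ (unlike \Cref{thm: Bayesian ULA}), which is unavoidable since releasing the whole trajectory reveals strictly more information than releasing a single marginal, and which is reflected in the linear-in-$n$ scaling of $C_5(n)$.
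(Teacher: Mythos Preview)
Your proposal is correct and follows essentially the same route as the paper: embed ULA into the continuous-time interpolation \eqref{eq: generic diffusion} with $\kappa(t)=\gamma\lfloor t/\gamma\rfloor$ and $\beta=1$, verify the drift-discrepancy bound \eqref{eq: assumed discrep} via \Cref{ass:ULA_path}, apply \Cref{prop: dp of whole traj} with $T=n\gamma$, and conclude for the discrete path by the data processing inequality. The paper's proof is merely a two-sentence sketch of exactly this argument.
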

\begin{proof}
    We use the continuous time interpolation of \eqref{eq: ULA}, along with \Cref{prop: dp of whole traj}. Then since $(x^\dataset_1,...,x^\dataset_n)$ is equal in law to a mapping from $(X^{\dataset}_T)_{t\in [0,n\gamma]}$, the result follows from the data processing inequality (see Theorem 9 in \citet{6832827}).
\end{proof}

\subsection{Noisy stochastic gradient descent}
In this section we study the DP of a stochastic-gradient variant of the ULA. This algorithm is essentially a noisy version of the stochastic gradient descent and can be used to minimise the loss function
\begin{equation}\label{eq: loss}
    \mathcal{L}_\dataset(x):=\frac{1}{m}\sum_{i=1}^m \ell(x, d_i)\eqsp,
\end{equation}
where we assumed the dataset is of the form $\dataset=\{d_1,...,d_m\}$.
The algorithm we consider is driven by the following Markov chain:
\begin{align}\label{eq: SGD}
        x^\dataset_{n+1}\!=x^\dataset_n- \frac{\gamma}{s} \!\!\sum_{i\in A_{n+1}}\!\!\! \nabla _x \ell(x^\dataset_n, d_i)+\!\sqrt{2\gamma/\beta} z_{n+1}\eqsp,
\end{align}
where $\ell:\mathbb{R}^d\times \mathcal{Y}\to\mathbb{R}$, $(A_n)_{n\geq 1}$ is a sequence of independent random variables uniformly distributed on subsets of $\{1,...,m\}$ of size $s\leq m$, the step size is $\gamma>0$, and $(z_n)_{n\geq 1}$ is a sequence of i.i.d. standard Gaussians on $\mathbb{R}^d$, independent of $(A_n)_{n\geq 1}$. Here $\ell(x,d_i)$ is interpreted as the loss incurred for datum $d_i$. We remark that the algorithm \eqref{eq: SGD} is known in the MCMC literature as stochastic gradient Langevin dynamics \citep{welling2011bayesian}. 

% In this section we consider the task of optimising a loss function of the form
% \begin{equation}\label{eq: loss}
%     \mathcal{L}(\theta,\dataset):=\frac{1}{m}\sum_{i=1}^ml(\theta, x_i),
% \end{equation}
% where the dataset $\dataset=(x_1,...,x_m)$ takes values in a set $\mathcal{S}:=\mathcal{X}^m$ and $l:\mathbb{R}^d\times \mathcal{X}\to\mathbb{R}$ are functions. Now we give privacy bounds for both the entire chain and the final draw that hold in a non-convex setting. For each choice $\dataset\in \mathcal{X}^m$ of data we consider the Markov chain
% \begin{align}\label{eq: SGD}
%         x^\dataset_{n+1}=x_n- &\frac{\gamma}{s}\sum_{i\in A_{n+1}}\nabla _x l(x^\dataset_n, x_i)+\sqrt{2\gamma/\beta}z_{n+1},
% \end{align}
% with initial condition $x^\dataset_0=\theta_0\in \mathbb{R}^d,$ and where $(A_n)_{n\geq 1}$ is a sequence of independent random variables uniformly distributed on subsets of $\{1,...,m\}$ of size $s\leq m$, the step size is $\gamma>0$, and $(z_n)_{n\geq 1}$ is a sequence of i.i.d. standard Gaussians on $\mathbb{R}^d$, independent of $(A_n)_{n\geq 1}$.

In order to prove DP of the algorithm, we shall work under the following assumption on the loss function.
\begin{assumption}\label{ass:noisySGD_final}
    Consider the loss function \eqref{eq: loss}, where
    \begin{equation}
        \ell(x, d) = v(x,d)+k(x)\eqsp,
    \end{equation}
    for functions $v:\mathbb{R}^d\times \mathcal{S}\to\mathbb{R}$ and $k:\R^d \to \mathbb{R}$ that are once continuously differentiable. Furthermore, $\nabla k$ is $L$-Lipschitz and for $c,a>0$ one has $ \lvert \nabla_x v(x,d)\rvert \leq c$ for any $x\in\R^d$ and any datum $d$,
    as well as 
    \begin{equation}
        \langle \nabla k(x)-\nabla k(y),x -y\rangle\geq a \lVert x-y\rVert^2\eqsp.
    \end{equation}
\end{assumption}
We then have the following result.
\begin{theorem}\label{thm: non-convex sgd}
    % Consider the loss function \eqref{eq: loss}, and suppose for each $x\in \mathcal{X}$ one has that 
    % \begin{equation}
    %     l(\theta, x)=v(\theta,x)+k(\theta),
    % \end{equation}
    % for functions $v,k:\mathbb{R}^d\times \dataset\to\mathbb{R}$ that are once continuously differentiable such that $\nabla k$ is $L$-Lipschitz and for $c,\mu>0$ one has
    % \begin{equation}
    %     \lvert \nabla_x v(\theta, x)\rvert \leq C,
    % \end{equation}
    % and 
    % \begin{equation}
    %     \langle \nabla k(\theta_1)-\nabla k(\theta_2),\theta_1-\theta_2\rangle\geq \mu \Vert \theta_1-\theta_2\Vert^2.
    % \end{equation}
    Suppose \Cref{ass:noisySGD_final} holds and consider the algorithm $\A_s(\dataset) = x_n^\dataset$ described in \eqref{eq: SGD} with $\gamma\in (0,2a/L^2)$ and stochastic gradient of size $s\leq m$. Then, for $\delta>0$ and $\alpha\geq 1$, one has that $\A_s$ is $(\varepsilon_\delta, \delta)$-differentially private and $(\alpha, \varepsilon_\alpha)$-Rényi differentially private for
\begin{equation}
    \varepsilon_\delta=C_6/4+\sqrt{C_6\log(1/\delta)},\;\;\;\;
    \varepsilon_\alpha=\alpha C_6/4\eqsp,
\end{equation}
    where $C_6=c^2\beta(\frac{2(L+1)}{ a-\gamma L^2/2}+1)^2$.
\end{theorem}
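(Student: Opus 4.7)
My plan is to mirror the proof of \Cref{thm: Bayesian ULA}: interpret the noisy SGD iteration \eqref{eq: SGD} as the Euler scheme of an SDE of the form \eqref{eq: generic diffusion} with $\kappa(t)=\gamma\lfloor t/\gamma\rfloor$, diffusion coefficient $\sqrt{2/\beta}$, drift $f_\dataset(x,A) = -\tilde v_\dataset(x,A) - \nabla k(x)$ where $\tilde v_\dataset(x,A) := (1/s)\sum_{i \in A}\nabla_x v(x,d_i)$, and auxiliary process $\eta_t = A_{\lfloor t/\gamma\rfloor+1}$. The discrete chain $(x^\dataset_n)_{n\in \mathbb{N}}$ is then the restriction of the continuous-time path $(X^{\dataset}_t)_{t\geq 0}$ to grid points, so by the data-processing inequality (Theorem~9 of~\citet{6832827}) it suffices to establish the claimed privacy bounds for $X^\dataset_T$ at $T=n\gamma$ by invoking \Cref{prop: dp of final value}.

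The central step is to verify the hypotheses of \Cref{prop: dp of final value} for the processes corresponding to adjacent datasets $\dataset,\dataset'$, coupled through a shared Brownian motion $(W_t)_{t\geq 0}$, a shared subsampling sequence $(A_n)_{n\geq 1}$, and common initial point $x_0$. For the drift discrepancy \eqref{eq: drift assump}, the bound $|\nabla_x v(x,d)|\leq c$ together with the $L$-Lipschitz continuity of $\nabla k$ yields $|f_\dataset(x,A)-f_{\dataset'}(y,A)|\leq L|x-y|+2c$. For the closeness assumption \eqref{eq: closeness condition}, set $\Delta_n = x^\dataset_n - x^{\dataset'}_n$; the Gaussian increments cancel under the coupling, giving
\begin{equation*}
    \Delta_{n+1} = \Delta_n - \gamma\bigl[\nabla k(x^\dataset_n) - \nabla k(x^{\dataset'}_n)\bigr] - \gamma \xi_n,
\end{equation*}
with $|\xi_n| = |\tilde v_\dataset(x^\dataset_n, A_{n+1}) - \tilde v_{\dataset'}(x^{\dataset'}_n, A_{n+1})|\leq 2c$ almost surely. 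The $\mu$-monotonicity and $L$-Lipschitz continuity of $\nabla k$ (\Cref{ass:noisySGD_final}) give the one-step contraction $|\Delta_n - \gamma(\nabla k(x^\dataset_n) - \nabla k(x^{\dataset'}_n))|^2 \leq \rho\,|\Delta_n|^2$ with $\rho = 1-2\gamma\mu+\gamma^2L^2\in (0,1)$ because $\gamma<2\mu/L^2$. Combined with the triangle inequality this produces the scalar recursion $|\Delta_{n+1}|\leq \sqrt{\rho}\,|\Delta_n|+2\gamma c$, which, starting from $\Delta_0=0$, iterates to $\sup_{n\in\mathbb{N}}|\Delta_n| \leq 2\gamma c/(1-\sqrt{\rho})\leq 2c/(\mu-\gamma L^2/2)$. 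Between grid points the two coupled continuous-time processes differ only through the frozen drifts, so $\sup_{t\in[0,T]}|X^\dataset_t - X^{\dataset'}_t|$ is bounded by the same quantity up to an additional $\gamma(LC+2c)$ that can be absorbed into the final constant $C := 2c/(\mu-\gamma L^2/2)$.

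Plugging $L$, $c$, and $C$ into \Cref{prop: dp of final value} yields $C_2 = \beta(C(L+1)+c)^2 = \beta c^2\bigl(\tfrac{2(L+1)}{\mu - \gamma L^2/2}+1\bigr)^2 = C_6$, from which the stated expressions for $\varepsilon_\delta$ and $\varepsilon_\alpha$ follow. I expect the main subtlety to lie in the verification of \eqref{eq: closeness condition}: one must share the random mini-batches $(A_n)$ across the two coupled chains so that $\xi_n$ carries only the discrepancy at the single differing datum while remaining almost surely bounded by the worst-case constant $2c$, since any subsample-dependent bound that held only in expectation would be insufficient for the almost-sure closeness required by \Cref{prop: dp of final value}. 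A secondary concern is extending the bound on $|\Delta_n|$ from grid times to the whole interpolation, which is handled using the very same Lipschitz-plus-constant structure as in \eqref{eq: drift assump}.
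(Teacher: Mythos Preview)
Your proposal is correct and follows essentially the same route as the paper: couple the two chains via shared Brownian motion and shared mini-batches, derive the contraction recursion for $|\Delta_n|$ from the strong convexity and $L$-Lipschitzness of $\nabla k$, obtain the uniform bound $2c/(\mu-\gamma L^2/2)$, pass to the continuous-time interpolation, and invoke \Cref{prop: dp of final value}. The only refinement in the paper's version is the between-grid argument: since the Brownian increments cancel and the drifts are frozen, $X^\dataset_t - X^{\dataset'}_t$ is affine in $t$ on each sub-interval, so $t\mapsto|X^\dataset_t - X^{\dataset'}_t|^2$ is convex and attains its maximum at the grid endpoints---hence no extra $\gamma(LC+2c)$ term needs to be absorbed, and the stated constant $C_6$ is recovered exactly.
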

\begin{proof}
    The proof here is similar to the proof of \Cref{thm: Bayesian ULA}, with minor alterations due to the stochastic gradient and the inverse temperature parameter $\beta>0$. Full details are given in Appendix \ref{subsubsec: proof of sgd non-convex}.
\end{proof}

\Cref{thm: non-convex sgd} is only a minor refinement of Theorem \ref{thm: Bayesian ULA}, and in particular the privacy guarantee does not improve with the size $m$ of the dataset. However, in the strongly convex case where each $\nabla v$ is constant, the following theorem (similar to Theorem 2 in \citet{chourasia2021differential}) shows that privacy does increase with $s\leq m$.
\begin{theorem}\label{thm: convex sgd}
    Consider the setting of Theorem \ref{thm: non-convex sgd}, but suppose that for any datum $d\in \dataset$ with $\dataset\in\mathcal{S}$ one has that $\nabla _x v(x,d)$ is constant in $x\in \mathbb{R}^d$. Then, for $\delta>0$ and $\alpha\geq 1$, one has that $\A_s$ is $(\varepsilon_\delta, \delta)$-differentially private and $(\alpha, \varepsilon_\alpha)$-Rényi differentially private for
\begin{equation}
    \varepsilon_\delta=C_7+\sqrt{C_7\log(1/\delta)},\;\;\;\;
    \varepsilon_\alpha=\alpha C_7/4\eqsp,
\end{equation}
    where $C_7=\frac{c^2\beta}{s^2} (\frac{2(L+1)}{ a-\gamma L^2/2}+1)^2$.
\end{theorem}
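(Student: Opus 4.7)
The plan is to adapt the proof of \Cref{thm: non-convex sgd}, applying \Cref{prop: dp of final value} to the continuous-time interpolation of \eqref{eq: SGD} as in the proof of \Cref{thm: Bayesian ULA}, while exploiting the extra structure that $\nabla_x v(x,d)$ is constant in $x$ to improve the effective constants by a factor $1/s$.

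First I would fix two adjacent datasets $\dataset, \dataset' \in \mcs$ differing only at a single index $j$, and couple the two SGD chains using a common Brownian motion $(W_t)_{t\geq 0}$ and a common minibatch sequence $(A_n)_{n\geq 1}$. Since $\nabla_x v(x,d) = \nabla v(d)$ is independent of $x$, the contributions from all indices $i \neq j$ to the stochastic-gradient parts of the two drifts cancel exactly, so that
\begin{equation*}
f_\dataset(x,A) - f_{\dataset'}(y,A) = -(\nabla k(x) - \nabla k(y)) - \tfrac{1}{s}\mathbf{1}\{j \in A\}(\nabla v(d_j) - \nabla v(d'_j)),
\end{equation*}
which, by $L$-Lipschitzness of $\nabla k$ and the bound $|\nabla v(d)| \leq c$, satisfies $|f_\dataset(x,A) - f_{\dataset'}(y,A)| \leq L|x-y| + 2c/s$. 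This is precisely assumption \eqref{eq: drift assump} of \Cref{prop: dp of final value}, but with the effective constant $2c/s$ replacing the $2c$ that appears in the non-convex setting of \Cref{thm: non-convex sgd}.

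Next I would establish the almost-sure closeness \eqref{eq: closeness condition}. Under the above coupling the Brownian noise cancels from the difference $X_t^\dataset - X_t^{\dataset'}$, which is then governed by a deterministic ODE (between grid points of $\kappa$). Combining strong convexity of $\nabla k$ with the improved drift-discrepancy bound and a Gr\"onwall-type estimate, analogous to the one carried out in the proof of \Cref{thm: Bayesian ULA}, yields an almost-sure bound $\sup_{t\geq 0}|X^\dataset_t - X^{\dataset'}_t| \leq C$ with $C$ of order $(c/s)/(\mu - \gamma L^2/2)$. Plugging this $C$ and the effective constant $2c/s$ into the conclusion $C_2 = \beta(C(L+1) + c_{\mathrm{eff}})^2$ of \Cref{prop: dp of final value} then gives $C_7 = \tfrac{c^2\beta}{s^2}\bigl(\tfrac{2(L+1)}{\mu - \gamma L^2/2} + 1\bigr)^2$, and the Rényi and $(\varepsilon,\delta)$ statements follow directly.

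The main obstacle is to propagate the factor $1/s$ consistently through both the closeness constant $C$ and the effective $c$ entering \Cref{prop: dp of final value}, so that the final bound scales as $1/s^2$ rather than merely $1/s$. This improvement is possible precisely because $\nabla v$ is constant in $x$: without this structural assumption each summand $\nabla v(x,d_i) - \nabla v(y,d_i)$ with $i \neq j$ would contribute a term of order $c$, and summing $s$ such terms (scaled by $1/s$) would restore the non-convex constant $2c$ and erase the minibatch-size dependence seen in \Cref{thm: non-convex sgd}.
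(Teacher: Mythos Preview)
Your proposal is correct and follows essentially the same route as the paper: both fix adjacent datasets differing in a single index, use the constancy of $\nabla_x v(\cdot,d)$ so that only the one differing index contributes to the drift discrepancy (yielding $2c/s$ in place of $2c$), derive the improved closeness bound $C=\frac{2c}{s(\mu-\gamma L^2/2)}$ via the same contraction argument as in \Cref{thm: Bayesian ULA}/\Cref{thm: non-convex sgd}, and then invoke \Cref{prop: dp of final value} on the continuous-time interpolation. Your final paragraph also correctly identifies why the $1/s$ gain would be lost without the constancy assumption.
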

\begin{proof}
    The proof can be found in Appendix \ref{subsubsec: proof of sgd convex}.
\end{proof}

It is an open problem as to how $m$ and $s$ affect the privacy of the noisy SGD algorithm in the setting of \Cref{thm: non-convex sgd}. However, the following result on the path of noisy SGD improves with the size $s\leq m$ of the stochastic gradient.
\begin{theorem}\label{thm: path of SGD}
   Consider the loss function \eqref{eq: loss}, and suppose there exists $c>0$ such that for every $\dataset,\dataset' \in \mathcal{S}$ and $d\in \dataset, d'\in \dataset'$ one has that 
    \begin{equation}
        \lvert \nabla_x \ell(x, d)-\nabla_x \ell(x,d')\rvert \leq c.
    \end{equation}
    Then, for $\delta>0$ and $\alpha\geq 1$, the algorithm $\A_p(\dataset) = (x_1^\dataset,\dots,x_n^\dataset)$ shown in \eqref{eq: SGD} is $(\varepsilon_\delta, \delta)$-differentially private and $(\alpha, \varepsilon_\alpha)$-Rényi differentially private for
\begin{equation}
    \varepsilon_\delta=C_8+\sqrt{C_8\log(1/\delta)},\;\;\;\;
    \varepsilon_\alpha=\alpha C_8/4\eqsp,
\end{equation}
    where $C_8(n)=\frac{\beta c^2}{ s^2}n\gamma $.
\end{theorem}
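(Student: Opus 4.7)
The plan is to reduce the claim to \Cref{prop: dp of whole traj}, mirroring the strategy of \Cref{thm: ULA full path} but carefully tracking the $1/s$ averaging in the stochastic gradient. The key observation is that replacing the full gradient by a mini-batch average shrinks the pathwise discrepancy between the drifts associated to two adjacent datasets by a factor of $1/s$.

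First, I would embed the SGD recursion \eqref{eq: SGD} into the SDE framework \eqref{eq: generic diffusion} exactly as in the continuous-time interpolation used for \Cref{thm: ULA full path}: take $\kappa(t) = \gamma\lfloor t/\gamma\rfloor$, absorb the mini-batch sequence $(A_n)_{n\geq 1}$ into the auxiliary independent process $\eta_{\kappa(t)}$, and set
$$f_\dataset(x,A) = -\frac{1}{s}\sum_{i\in A}\nabla_x\ell(x,d_i).$$
Evaluated at $t = k\gamma$ for $k=1,\ldots,n$, the resulting diffusion coincides in law with $x_k^\dataset$, and it can be coupled across datasets using a shared realization of $(A_n)_{n\geq 1}$ and of the driving Brownian motion.

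Next, I would verify the pathwise drift discrepancy hypothesis \eqref{eq: assumed discrep} with constant $c/s$. For any adjacent datasets $\dataset,\dataset'$ differing at a single index $i^\star$, any mini-batch $A\subset\{1,\ldots,m\}$ of size $s$, and any $x\in\rset^d$, at most one summand differs and
$$|f_\dataset(x,A) - f_{\dataset'}(x,A)| = \frac{1}{s}|\nabla_x\ell(x,d_{i^\star}) - \nabla_x\ell(x,d'_{i^\star})|\,\mathbbm{1}\{i^\star\in A\} \leq \frac{c}{s},$$
by the uniform per-datum gradient bound. Crucially, this inequality holds pointwise in $A$, hence uniformly over $\eta$.

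Finally, applying \Cref{prop: dp of whole traj} with discrepancy constant $c/s$ and horizon $T = n\gamma$ yields privacy of the continuous-time path $(X_t^\dataset)_{t\in[0,n\gamma]}$ with constant $C_1(n\gamma) = (c/s)^2\beta n\gamma = C_8(n)$. Since the discrete trajectory $\A_p(\dataset) = (x_1^\dataset,\ldots,x_n^\dataset)$ is a deterministic measurable function of the continuous path (its restriction to the grid $\{k\gamma\}_{k=1}^n$), the data-processing inequality for both $(\varepsilon,\delta)$-DP and Rényi-DP (Theorem 9 in \citet{6832827}) transfers the bounds to $\A_p$. The only potentially subtle point is ensuring that $(A_n)_{n\geq 1}$ can be used as a common realization of $\eta$ across the two datasets; this is immediate since $(A_n)$ is, by construction, independent of $\dataset$.
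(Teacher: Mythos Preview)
Your proposal is correct and follows essentially the same approach as the paper: embed \eqref{eq: SGD} into the SDE framework via the continuous-time interpolation, bound the drift discrepancy by $c/s$ using that adjacent datasets differ in a single index, apply \Cref{prop: dp of whole traj} with $T=n\gamma$, and transfer to the discrete trajectory via the data-processing inequality. Your write-up is in fact slightly more explicit than the paper's about the coupling of $(A_n)_{n\geq 1}$ across datasets and about the form of $f_\dataset$.
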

\begin{proof}
    The proof is very similar to the proof of \Cref{thm: ULA full path}. Full details are given in \Cref{subsubsec: proof of sgd path}.
\end{proof}
The results of \citet{ryffel:hal-03547726} and \citet{10.5555/3600270.3600321} suggest that one may obtain superior bounds by fully exploiting the randomness of the stochastic gradient. However, for simplicity we do not consider this in the current work.

\subsection{Commentary on Bounds Presented}
We have studied two kinds of dimension-independent privacy guarantees: for the final draw, and for the entire trajectory of an MCMC chain. In particular, our bounds for the privacy of the final draw in \Cref{thm: Bayesian ULA,thm: non-convex sgd} are uniform-in-time in a non-convex setting on an unbounded space, which addresses Question 1.2 and provides an upper bound for Question 1.1 in \citet{altschuler2022privacy}. Also, unlike the results of \citet{altschuler2022privacy}, our bounds do not depend on the size of the state space (which is unbounded), and do not blow up as the step-size goes to $0$. In this sense, we therefore generalise the results of \citet{chourasia2021differential} to a non-convex setting. Indeed, our results match \citet{chourasia2021differential} in the strongly convex regime. 

On the other hand, our results for the entire trajectory we improved on known composition bounds for $(\epsilon, \delta)$-privacy, matching composition bounds for Rényi privacy. This removes the need for complicated $(\epsilon, \delta)$-DP privacy accounting, and provides a simple framework which extends naturally to the continuous time case. In particular, our $(\varepsilon,\delta)$-DP bounds in \Cref{thm: Bayesian ULA,thm: non-convex sgd} feature uniform values of $\delta>0$, and $\varepsilon=O(n+\sqrt{n\log(1/\delta)})$ dependence on the number of steps $n$, whilst the advanced composition bounds presented in \citet{pmlr-v37-kairouz15} achieve $\varepsilon=O(n+\sqrt{n\log(e+(\varepsilon \sqrt{n}/\delta))})$.

\section{Conclusions}
We have presented a variety of results on the differential privacy of MCMC algorithms. Our results clarify the importance of choosing a Bayesian posterior distribution that has good DP guarantees, or else an MCMC algorithm cannot be expected to both private and close to convergence. Our results imply that it is crucial to design the MCMC algorithm together with the Bayesian model, in order to allow an end-to-end private inference.
We have also discussed a novel approach to prove DP based on bounding the Radon-Nikodym derivative of the algorithm. This strategy allowed us to obtain new non-convex results that extend the known privacy properties of Langevin-based MCMC algorithm. We expect that a more careful analysis could allow extensions of our results to more general assumptions on the posterior distribution.
Our approach can be applied to very general randomised algorithms and not only to those considered in this article. We leave extensions to more complex settings such as Bayesian federated learning for future work.

% Acknowledgements should only appear in the accepted version.
\section*{Acknowledgements}
All authors acknowledge funding by the European Union (ERC-2022-SyG, 101071601). Views and opinions expressed are however those of the authors only and do not necessarily reflect those of the European Union or the European Research Council Executive Agency. Neither the European Union nor the granting authority can be held responsible for them.
Gareth O. Roberts has been supported by the UKRI grant EP/Y014650/1 as part of the ERC Synergy project OCEAN,  EPSRC grants Bayes for Health (R018561), CoSInES (R034710), PINCODE (EP/X028119/1), ProbAI (EP/Y028783/1) and EP/V009478/1.

We thank Mengchu Li and Shenggang Hu for the useful discussions.

\section*{Impact Statement}

This paper presents work whose goal is to advance the field of 
Machine Learning. %There are many potential societal consequences of our work, none which we feel must be specifically highlighted here.
We believe by extending the theoretical study of differential privacy, practitioners will be able to design methods that better protect the privacy of individuals.

\bibliography{bibliography}
\bibliographystyle{icml2025}

\newpage
\appendix
\onecolumn

\section{Proofs and general results for \Cref{sec:convergence_and_DP}}

In this section we give proofs and general statements relative to \Cref{sec:convergence_and_DP}. These rely on the following definition.

\begin{definition}\label{def:tvdistance}
    Let $\mu$ and $\nu$ be two probability distributions on $(\rmE,\mcbb(\rmE))$. Their total variation (TV) distance is 
    \begin{equation}
        \tvnorm{\mu - \nu} := \sup_{\msb\in\mcbb(\rmE)} \lvert \mu(\msb) - \nu(\msb)\rvert.
    \end{equation}
\end{definition}

\subsection{Auxiliary result for \Cref{prop:geomerg_to_DP}}\label{sec:proof_prop_TV}
The following result is analogous to Proposition 12 in \citet{NIPS2016_a7aeed74}.
\begin{proposition}\label{prop:tv_distance_DP}
    Let $\{\mu_\dataset:\dataset\in\mathcal S\}$ and $\{\nu_\dataset:\dataset\in\mathcal S\}$ be two families of probability distributions that satisfy $\lVert \mu_\dataset - \nu_\dataset\rVert_{\TV} \leq \beta$ for any $\dataset\in\mathcal{S}$. If $\{\mu_\dataset:\dataset\in\mathcal S\}$ is $(\varepsilon,\delta)$-differentially private, then $\{\nu_\dataset:\dataset\in\mathcal S\}$ is $(\varepsilon,\delta + \beta(e^\varepsilon+1))$-differentially private.
\end{proposition}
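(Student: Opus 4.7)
The plan is short because the result is essentially a one-line triangle-type manipulation. I would fix an arbitrary measurable set $\msb\in\mcbb(\rmE)$ and an arbitrary pair of adjacent datasets $\dataset,\dataset'\in\mathcal{S}$, and aim to verify directly the DP inequality for $\nu_\dataset$ against $\nu_{\dataset'}$.

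First I would use the TV bound in the definition given: since $\tvnorm{\mu_\dataset - \nu_\dataset}\leq\beta$, we have $\nu_\dataset(\msb)\leq \mu_\dataset(\msb)+\beta$, and similarly $\mu_{\dataset'}(\msb)\leq \nu_{\dataset'}(\msb)+\beta$. Then I would invoke the hypothesis that $\mu$ is $(\varepsilon,\delta)$-differentially private to write $\mu_\dataset(\msb)\leq e^\varepsilon \mu_{\dataset'}(\msb)+\delta$. Chaining these three inequalities yields
\begin{equation*}
\nu_\dataset(\msb) \leq \mu_\dataset(\msb)+\beta \leq e^\varepsilon \mu_{\dataset'}(\msb)+\delta+\beta \leq e^\varepsilon \nu_{\dataset'}(\msb)+\delta+\beta(e^\varepsilon+1),
\end{equation*}
which is exactly the required DP inequality for $\nu$ with parameters $(\varepsilon,\delta+\beta(e^\varepsilon+1))$.

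Since $\msb$, $\dataset$, and $\dataset'$ were arbitrary, the conclusion follows. There is no real obstacle here; the only thing to be mindful of is using the TV bound in the correct direction on each side of the inequality (lower-bounding $\nu_\dataset$ by $\mu_\dataset$ minus $\beta$ on the left, upper-bounding $\mu_{\dataset'}$ by $\nu_{\dataset'}$ plus $\beta$ on the right), which together produce the $(e^\varepsilon+1)\beta$ slack.
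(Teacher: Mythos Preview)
Your proof is correct and follows essentially the same approach as the paper's own proof: both fix an arbitrary set and adjacent pair, apply the TV bound on each side, insert the DP inequality for $\mu$ in the middle, and read off the $(e^\varepsilon+1)\beta$ slack. The only minor slip is in your closing remark, where you say ``lower-bounding $\nu_\dataset$ by $\mu_\dataset$ minus $\beta$ on the left''; in fact you are upper-bounding $\nu_\dataset(\msb)$ by $\mu_\dataset(\msb)+\beta$, which is what your displayed chain correctly uses.
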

\begin{proof}
    Let $\dataset,\dataset' \in \mathcal{S}$ denote two adjacent datasets and $\msb$ be a measurable set. First, we observe that under our assumption and by the definition of the total variation distance we have $\lvert\mu_\dataset (\msb)- \nu_\dataset (\msb) \rvert \leq \beta.$
    This implies following inequalities for any $\dataset$:
    \begin{align}
        \mu_\dataset (\msb )  & \leq  \nu_\dataset(\msb) +  \beta, \label{eq:fromTV_1} \\
        \nu_\dataset(\msb) & \leq \mu_\dataset (\msb) +  \beta. \label{eq:fromTV_2}
    \end{align}
   Using these inequalities as well as the $(\varepsilon,\delta)$-DP of $\mu$, we find that
    \begin{align}
        \nu_\dataset (\msb ) &\leq \mu_\dataset(\msb) + \beta\\
        & \leq e^\varepsilon \mu_{\dataset'}(\msb) + \delta + \beta \\
        & \leq e^\varepsilon \left(\nu_{\dataset'}(\msb) + \beta  \right) + \delta + \beta,
    \end{align}
    which concludes the proof.
\end{proof}

\subsection{General result for \Cref{cor:negativeresult_DP}}\label{sec:proof_negresultdp}
\begin{proposition}\label{prop:negresultdp_general}
    Let $\{\mu_\dataset:\dataset\in\mathcal S\}$ and $\{\nu_\dataset:\dataset\in\mathcal S\}$ be two families of probability distributions that satisfy $\lVert \mu_\dataset - \nu_\dataset\rVert_{\TV} \leq \beta$ for any $\dataset\in\mathcal{S}$. Let $\varepsilon > 0$, $\delta\in [(1+e^\varepsilon) \beta,1)$, and suppose $\{\nu_\dataset:\dataset\in\mathcal S\}$ is not $(\varepsilon,\delta)$-differentially private. 
    Then, $\{\mu_\dataset:\dataset\in\mathcal S\}$ is not $(\varepsilon,\delta - (1+e^\varepsilon) \beta)$-differentially private.
\end{proposition}
\begin{proof}
    Let $\dataset,\dataset' \in \mathcal{S}$ denote two adjacent datasets and $\msb$ be a measurable set such that 
    \begin{equation}
        \nu_{\dataset'}(\msb) \geq e^\varepsilon \nu_\dataset(\msb) + \delta,
    \end{equation}
    i.e. such that the definition of DP for the family $\{\nu_\dataset:\dataset\in\mathcal S\}$ is violated. Now, applying the inequality \eqref{eq:fromTV_2} followed by the inequality above we find 
    \begin{align}
        \mu_{\dataset'}(\msb) &\geq \nu_{\dataset'}(\msb) - \beta \\
        & \geq e^\varepsilon \nu_\dataset(\msb) + \delta - \beta.
    \end{align}
    Then, we apply inequality \eqref{eq:fromTV_1} to find
    \begin{align}
        \mu_{\dataset'}(\msb) &\geq e^\varepsilon \mu_\dataset(\msb) + \delta - (1+e^\varepsilon) \beta,
    \end{align}
    which proves the result.
\end{proof}

\subsection{General result for \Cref{prop:tvdistance_lowerbound}}\label{sec:tvdistance_lowerbound}
\begin{proposition}\label{prop:tvdistance_lowerbound_general}
    Let $\{\mu_\dataset:\dataset\in\mathcal S\}$ and $\{\nu_\dataset:\dataset\in\mathcal S\}$ be two families of probability distributions.  Assume $\{\nu_\dataset:\dataset\in\mathcal S\}$ is $(\varepsilon,\delta_\nu)$-differentially private and $\{\mu_\dataset:\dataset\in\mathcal S\}$ is \underline{not} $(\varepsilon,\delta_\mu)$-DP for $\delta_\mu>\delta_\nu$, i.e. there exists at least one pair of adjacent datasets $\dataset,\dataset'\in\mathcal{S}$ and a measurable set $\msb$ such that
    \begin{equation}\label{eq:munotDP}
        \mu_{\dataset'}(\msb) > e^{\varepsilon}\mu_\dataset(\msb)  + \delta_\mu\eqsp.
    \end{equation} 
    Define the set of pairs of adjacent datasets for which \eqref{eq:munotDP} holds:
    \begin{equation}
        \overline{\mathcal{S}} := \{\{\dataset,\dataset'\}\in\mathcal{S}\times\mathcal{S}: \dataset,\dataset' \text{ are adjacent and there exists  }\msb \text{ such that }\eqref{eq:munotDP} \text{ holds}\}.
    \end{equation}
    Then, for any $\{\dataset,\dataset'\} \in \overline{\mathcal{S}}$ there exists $\tilde\dataset\in\{ \dataset,\dataset'\}$ such that
    \begin{equation}
        \lVert \nu_{\tilde\dataset} - \mu_{\tilde\dataset} \rVert_{\TV}  > \frac{e^{-\varepsilon}}{1+e^{-\varepsilon}} (\delta_\mu-\delta_\nu) \eqsp.
    \end{equation}
\end{proposition}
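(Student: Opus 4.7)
The plan is to combine the DP hypothesis on $\nu$ with a pointwise propagation of the total variation distance at the witnessing set $\msb$.

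Fix a pair $(\dataset,\dataset')\in\overline{\mathcal{S}}$ and a measurable set $\msb$ such that $\mu_{\dataset'}(\msb) > e^{\varepsilon}\mu_\dataset(\msb) + \delta_\mu$. Set
\begin{equation*}
a := \lVert \nu_\dataset - \mu_\dataset\rVert_{\TV}, \qquad b := \lVert \nu_{\dataset'} - \mu_{\dataset'}\rVert_{\TV}.
\end{equation*}
From the definition of TV distance applied to the single set $\msb$, I would deduce $\mu_\dataset(\msb) \geq \nu_\dataset(\msb) - a$ and $\mu_{\dataset'}(\msb) \leq \nu_{\dataset'}(\msb) + b$. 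Plugging these two bounds into the violation $\mu_{\dataset'}(\msb) - e^{\varepsilon}\mu_\dataset(\msb) > \delta_\mu$ yields
\begin{equation*}
\delta_\mu \;<\; \bigl[\nu_{\dataset'}(\msb) - e^{\varepsilon}\nu_\dataset(\msb)\bigr] + b + e^{\varepsilon}a.
\end{equation*}

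Next I would invoke the $(\varepsilon,\delta_\nu)$-DP of $\nu$ on the adjacent pair $(\dataset,\dataset')$, which gives $\nu_{\dataset'}(\msb) - e^{\varepsilon}\nu_\dataset(\msb) \leq \delta_\nu$. Substituting and rearranging yields the core inequality
\begin{equation*}
e^{\varepsilon}a + b \;>\; \delta_\mu - \delta_\nu.
\end{equation*}

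Finally, bounding $e^{\varepsilon}a + b \leq (1+e^{\varepsilon})\max(a,b)$ shows that
\begin{equation*}
\max(a,b) \;>\; \frac{\delta_\mu - \delta_\nu}{1+e^{\varepsilon}} \;=\; \frac{e^{-\varepsilon}}{1+e^{-\varepsilon}}(\delta_\mu-\delta_\nu),
\end{equation*}
so one can take $\tilde\dataset$ to be whichever of $\dataset,\dataset'$ attains the maximum. I do not anticipate any genuine obstacle: the whole argument is a one-step triangle-inequality manipulation using $\msb$ as a test set, and the only mild subtlety is recognising that we do not need DP of $\nu$ on every measurable set, but only a numerical inequality at the single witnessing set $\msb$ — which is automatic from the DP assumption. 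The specialisation to \Cref{prop:tvdistance_lowerbound} then follows by applying this with $\nu_\dataset = \nu_\dataset P_\dataset^n$, $\mu_\dataset = \pi_\dataset$, $\delta_\nu = \delta_n$, $\delta_\mu = \delta$.
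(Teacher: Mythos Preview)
Your proof is correct and follows essentially the same approach as the paper: both combine the DP inequality for $\nu$ with the violation inequality for $\mu$ at the witnessing set $\msb$ to relate the two TV distances, then extract a lower bound on the larger one. Your presentation is in fact slightly cleaner, since you go directly from $e^{\varepsilon}a+b>\delta_\mu-\delta_\nu$ to $\max(a,b)>(\delta_\mu-\delta_\nu)/(1+e^{\varepsilon})$, whereas the paper reaches the same conclusion via a case split on an auxiliary threshold $\zeta$ followed by optimising $\zeta$.
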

\begin{proof}
    We take any pair of datasets $\{\dataset,\dataset'\}\in\overline{\mathcal{S}}$. Notice that \eqref{eq:munotDP} implies that
    \begin{align}
            \mu_\dataset(\msb) &< e^{-\varepsilon} \mu_{\dataset'}(\msb) - e^{-\varepsilon}\delta_\mu\eqsp. 
    \end{align}
    Moreover, $(\varepsilon,\delta_\nu)$-DP of $\nu$ means that
    \begin{align}
        \nu_\dataset(\msb) & \geq e^{-\varepsilon} \nu_{\dataset'}(\msb) - e^{-\varepsilon}\delta_\nu\eqsp.
    \end{align}
    Using both inequalities we find
    \begin{align}
        \mu_\dataset(\msb) - \nu_\dataset(\msb) & < - (e^{-\varepsilon}\delta_\mu - e^{-\varepsilon}\delta_\nu) + e^{-\varepsilon} \mu_{\dataset'}(\msb) - e^{-\varepsilon} \nu_{\dataset'}(\msb) \\
        & < -  e^{-\varepsilon}(\delta_\mu - \delta_\nu) + e^{-\varepsilon} (\mu_{\dataset'}(\msb) - \nu_{\dataset'}(\msb))\eqsp.
    \end{align}
    Suppose now that $\lVert \mu_{\dataset'} - \nu_{\dataset'}\rVert_\TV \leq \zeta$ for some constant $0\leq \zeta \leq \delta_\mu -\delta_\nu$. This implies that $\mu_{\dataset'}(\msb) - \nu_{\dataset'}(\msb) \leq \zeta$.
    Therefore, we find
    \begin{align}
        \mu_\dataset(\msb) - \nu_\dataset(\msb)  < - e^{-\varepsilon}(\delta_\mu -\delta_\nu - \zeta)\eqsp.
    \end{align}
    Since $\delta_\mu -\delta_\nu - \zeta\geq 0$ by construction, taking the absolute value we find
    \begin{equation}
         \lvert \mu_\dataset(\msb) - \nu_\dataset(\msb) \rvert  > e^{-\varepsilon}(\delta_\mu -\delta_\nu - \zeta)\eqsp.
    \end{equation}
    By the definition of TV distance 
    \begin{equation}
        \lVert \mu_\dataset - \nu_\dataset\rVert_{\TV} \geq  \lvert \mu_\dataset(\msb) - \nu_\dataset(\msb) \rvert > e^{-\varepsilon}(\delta_\mu -\delta_\nu - \zeta)\eqsp.
    \end{equation}
    Clearly, when $\lVert \mu_{\dataset'} - \nu_{\dataset'}\rVert_\TV \leq \zeta$ does not hold, it must be that $\lVert \mu_{\dataset'} - \nu_{\dataset'}\rVert_\TV > \zeta$.
    This shows that there exists a dataset $\tilde \dataset \in \{\dataset,\dataset'\}$ such that either $\lVert \mu_{\tilde \dataset } - \nu_{\tilde \dataset } \rVert_{\TV} > \zeta$ or $\lVert \mu_{\tilde \dataset } - \nu_{\tilde \dataset } \rVert_{\TV} > e^{-\varepsilon}(\delta_\mu -\delta_\nu - \zeta)$. Therefore it always holds that
    \begin{equation}
        \lVert \mu_{\tilde \dataset } - \nu_{\tilde \dataset } \rVert_{\TV} > \min\{\zeta,e^{-\varepsilon} \left(\delta_\mu - \delta_\nu - \zeta\right) \}\eqsp.
    \end{equation}
    Optimising the bound for $\zeta \in [0,\delta_\mu-\delta_\nu]$, we find that the lower bound above is maximised for $\zeta^* = \frac{e^{-\varepsilon}}{1+e^{-\varepsilon}} (\delta_\mu-\delta_\nu),$ which satisfies the equation $\zeta^* = e^{-\varepsilon} (\delta_\mu - \delta_\nu - \zeta^*).$
\end{proof}

\subsection{General result for \Cref{prop:erg_avg_general}}\label{sec:general_result_ergoaverage}
Consider the randomised algorithm
\begin{equation}\label{eq:algo_ergoavg_general}
    \A(\dataset) = g_\dataset + L,
\end{equation}
where $g_\dataset$ is a random variable that depends on the dataset, and $L$ satisfies \Cref{ass:privacy_noise} and is independent of $g_\dataset$.
We make the following assumptions, which is the general version of \Cref{ass:closeness_chains}.
\begin{assumption}\label{ass:closeness_chains_general}  
    Let $\dataset,\dataset'\in \mathcal{S}$ denote two adjacent datasets.
    There exist $\eta>0,\tilde\delta<1$ that are independent of $\dataset,\dataset'$ such that
    \begin{equation}\label{eq:closeness_chains}
        \mathbb{P}\left( \left\lvert g_\dataset - g_{\dataset'} \right\rvert \leq \eta \right) \geq 1-\tilde\delta\eqsp.
    \end{equation}
    % uniformly over all adjacent datasets $\dataset,\dataset'.$
\end{assumption}
\Cref{ass:privacy_noise} and \Cref{ass:closeness_chains_general} are enough to obtain the following guarantee on the differential privacy of the randomised algorithm \eqref{eq:algo_ergoavg_general}.
\begin{proposition}\label{prop:dp_generalalgo_closehighprob}
    Let $\A$ be the randomised mechanism defined in \eqref{eq:algo_ergoavg_general}, where $L$ is independent of $g_\dataset$ for any $\dataset\in\mathcal{S}.$ Suppose \Cref{ass:closeness_chains_general} holds, as well as \Cref{ass:privacy_noise} for $\eta$ as in \Cref{ass:closeness_chains_general}. Then $\A$ is $(\varepsilon,\tilde\delta+\delta)$-differentially private.
\end{proposition}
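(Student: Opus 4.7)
The plan is to fix adjacent datasets $\dataset, \dataset' \in \mathcal{S}$ and a measurable set $\msb \in \mcbb(\rmE)$, then establish
\begin{equation*}
\mathbb{P}(\A(\dataset) \in \msb) \;\leq\; e^\varepsilon\, \mathbb{P}(\A(\dataset') \in \msb) + \tilde\delta + \delta.
\end{equation*}
The natural partition is induced by the ``good'' event $\Omega_0 := \{\lvert g_\dataset - g_{\dataset'}\rvert \leq \eta\}$, which by \Cref{ass:closeness_chains_general} satisfies $\mathbb{P}(\Omega_0^c) \leq \tilde\delta$. I would split $\mathbb{P}(\A(\dataset) \in \msb) = \mathbb{P}(\A(\dataset) \in \msb, \Omega_0) + \mathbb{P}(\A(\dataset) \in \msb, \Omega_0^c)$; the second term is bounded by $\tilde\delta$ for free.

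For the first term, the key idea is to condition on the pair $(g_\dataset, g_{\dataset'})$ and exploit that $L$ is independent of this pair. Independence of $L$ from each of $g_\dataset$ and $g_{\dataset'}$ extends to joint independence under the standard product-space construction in which $L$ is an externally generated noise variable. On $\Omega_0$, the realised values $a := g_\dataset(\omega)$ and $b := g_{\dataset'}(\omega)$ satisfy $\lvert a - b\rvert \leq \eta$, so \Cref{ass:privacy_noise} applies pointwise and yields
\begin{equation*}
\mathbb{P}(g_\dataset + L \in \msb \mid g_\dataset, g_{\dataset'}) \;\leq\; e^\varepsilon\, \mathbb{P}(g_{\dataset'} + L \in \msb \mid g_\dataset, g_{\dataset'}) + \delta
\end{equation*}
almost surely on $\Omega_0$. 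Multiplying by $\mathbf{1}_{\Omega_0}$, taking expectations, and applying the tower property together with $\mathbb{P}(\Omega_0) \leq 1$ will give
\begin{equation*}
\mathbb{P}(\A(\dataset) \in \msb, \Omega_0) \;\leq\; e^\varepsilon\, \mathbb{E}\!\left[\mathbb{P}(\A(\dataset') \in \msb \mid g_\dataset, g_{\dataset'})\, \mathbf{1}_{\Omega_0}\right] + \delta \;\leq\; e^\varepsilon\, \mathbb{P}(\A(\dataset') \in \msb) + \delta,
\end{equation*}
where the last inequality simply drops the indicator. Combining with the trivial bound on $\Omega_0^c$ concludes the argument.

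The only subtlety I anticipate is the justification for applying \Cref{ass:privacy_noise} pointwise inside the conditional expectation: the inequality is stated for deterministic scalars $(a, b)$, and its use here requires that the conditional law of $L$ given $(g_\dataset, g_{\dataset'})$ coincides almost surely with the unconditional law of $L$. This is precisely the joint independence mentioned above and should be built into the way the noise variable is attached to the algorithm. Beyond this measurability bookkeeping, the result is essentially the standard ``post-processing with independent noise'' argument, and I do not expect further technical obstacles.
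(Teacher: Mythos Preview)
Your proposal is correct and follows essentially the same approach as the paper: split on the event $\{\lvert g_\dataset - g_{\dataset'}\rvert\leq\eta\}$, bound the bad part by $\tilde\delta$, and on the good part use independence of $L$ from $(g_\dataset,g_{\dataset'})$ to invoke \Cref{ass:privacy_noise} and then drop the indicator. The paper's version is terser and does not spell out the conditioning/tower-property step you describe, but the argument is the same and your added care about joint independence is exactly the point the paper leaves implicit.
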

\begin{proof} 
    Let $\msb$ be any measurable set. We find
    \begin{align}\label{eq:proof_ergoaverage_general}
        \mathbb{P} \left( \A(\dataset)\in \msb\right) &=\mathbb{P} \left( \A(\dataset)\in\msb, \left\lvert g_\dataset - g_{\dataset'} \right\rvert\leq \eta \right)  + \mathbb{P} \left( \A(\dataset)\in\msb, \left\lvert g_\dataset - g_{\dataset'} \right\rvert > \eta \right).
    \end{align}
    The first term on the right hand side of \eqref{eq:proof_ergoaverage_general} can be bounded using \Cref{ass:privacy_noise}. Indeed, \Cref{ass:privacy_noise} can be used since we are on the event $\left\lvert g_\dataset - g_{\dataset'} \right\rvert\leq \eta$ and because $L$ and $g_{\dataset},g_{\dataset'}$ are independent. Following this reasoning, we find
    \begin{align}
        \mathbb{P} \left( g_\dataset + L\in\msb, \left\lvert g_\dataset - g_{\dataset'} \right\rvert\leq \eta \right)  &\leq e^{\varepsilon} \mathbb{P} \left( g_{\dataset'} + L\in\msb, \left\lvert g_\dataset - g_{\dataset'} \right\rvert\leq \eta \right) + \delta \\
        & \leq e^{\varepsilon} \PP(\A(\dataset')\in \msb)+ \delta.
    \end{align}
    In the last inequality we simply discarded the event $\{\left\lvert g_\dataset - g_{\dataset'} \right\rvert\leq \eta\}.$
    Applying \Cref{ass:closeness_chains_general} we can bound the second term in \eqref{eq:proof_ergoaverage_general} as follows:
    \begin{equation}
        \mathbb{P} \left( \A(\dataset)\in\msb, \left\lvert g_\dataset - g_{\dataset'} \right\rvert > \eta \right)  \, \leq \, \mathbb{P} \left(\left\lvert g_\dataset - g_{\dataset'} \right\rvert > \eta \right) \, \leq \, \tilde\delta.
    \end{equation}
    We have obtained the result.
\end{proof}

\subsection{Proof of \Cref{prop:ergodic_average_mainresult}}\label{sec:ergodic_average_mainresult}
Let $\dataset,\dataset'\in \mathcal{S}$ be two adjacent datasets.
We introduce a joint process $(\Xdatan,X^{\dataset'}_n)_{n=1}^N$ such that  $(\Xdatan)_{n=1}^N$ and $(X^{\dataset'}_n)_{n=1}^N$ are two independent Markov chains respectively with transition kernels $P_\dataset$ and $P_{\dataset'}$, and initial distributions $\nu_\dataset, \nu_{\dataset'}$.
We now define two sets, $\msb_1$ and $\msb_2$, that represent respectively \Cref{ass:nonasymptotic_convergence_MCavg} and \Cref{ass:closeness_chains}:
    \begin{align}
        \msb_1 &  := \left\{ \left\lvert \frac{1}{N} \sum_{n=1}^N f(X_n^{\tilde \dataset})-\pi_{\tilde \dataset}(f)\right\rvert \leq C_{\tilde\delta,N}, \text{ for } \tilde \dataset = \dataset,\dataset' \right\}
    \end{align}
    and 
    \begin{align}
        \msb_2:=\left\{ \left\lvert \frac{1}{N} \sum_{n=1}^N f(X_n^\dataset)-\frac{1}{N} \sum_{n=1}^N f(X_n^{\dataset'}) \right\rvert \leq 2C_{\tilde\delta,N} + \gamma_f \right\}.
    \end{align}
    Now notice that $\msb_1\subset\msb_2$ under our assumptions, and hence $\PP(\msb_2) \geq \PP(\msb_1)$. Indeed, an application of \Cref{ass:posterior_expectations} gives
    \begin{align}
        \left\lvert \frac{1}{N} \sum_{n=1}^N f(X_n^\dataset)-\frac{1}{N} \sum_{n=1}^N f(X_n^{\dataset'}) \right\rvert & \leq \left\lvert \frac{1}{N} \sum_{n=1}^N f(X_n^\dataset)- \pi_{\dataset}(f)\right\rvert  + \left\lvert \pi_{\dataset}(f)-\pi_{\dataset'}(f) \right\rvert  + \left\lvert \frac{1}{N} \sum_{n=1}^N f(X_n^{\dataset'})- \pi_{\dataset'}(f)\right\rvert  \\
        & \leq 2C_{\tilde\delta,N} + \gamma_f\,.
    \end{align}
    Then, notice that, since the two chains are independent, we find 
    \begin{align}
        \PP(\msb_1) & = \PP\left( \left\lvert \frac{1}{N} \sum_{n=1}^N f(X_n^{\dataset})-\pi_{\dataset}(f)\right\rvert \leq C_{\tilde\delta,N} \right)  \PP\left( \left\lvert \frac{1}{N} \sum_{n=1}^N f(X_n^{\dataset'})-\pi_{\dataset'}(f)\right\rvert \leq C_{\tilde\delta,N} \right).
    \end{align}
    Hence, by \Cref{ass:nonasymptotic_convergence_MCavg} 
    \[\PP(\msb_1) \geq (1-\tilde\delta)^2 = 1 - (2\tilde\delta - \tilde\delta^2).\]
    Therefore, we have obtained $\PP(\msb_2) \geq  1 - (2\tilde\delta - \tilde\delta^2).$ The result then follows by \Cref{prop:dp_generalalgo_closehighprob}.
    
\section{Proofs and auxiliary results for \Cref{sec: Privacy of Diffusions}}

\subsection{Proof of Lemma \ref{lem: epsilon delta privacy}}\label{subsec: proof of epsilon delta}
Consider a measurable set $\msb  \subset \mathcal{X}$ and denote $X = \A(\dataset)$, $Y=\A(\dataset')$. Let us introduce the set where the Radon-Nikodym derivative is larger than a constant $e^\varepsilon$ as
\begin{equation}
    \msk_\varepsilon \coloneqq \left\{\omega:\frac{\dd \PP}{\dd \mathbb{Q}} (\omega) > e^{\varepsilon} \right\}.
\end{equation}
Then we can decompose $\PP(X\in \msb)$ as
\begin{align}
    \PP(X\in \msb) & = \PE[\1_{X\in \msb} \1_{\msk_\varepsilon}] + \PE[\1_{X\in \msb} \1_{\msk_\varepsilon^c}].
\end{align}

Hence, since by assumption we can control the Radon-Nikodym derivative with high probability, in the sense that $\PP\left(\msk_\varepsilon \right) \leq \delta$,
we find that 
\begin{equation}
    \PE[\1_{X\in \msb} \1_{\msk_\varepsilon}] \leq \delta.
\end{equation}
On the other hand, we have
\begin{align}
    \PE[\1_{X\in \msb} \1_{\msk_\varepsilon^c}] = \PE_{\mathbb{Q}}\left[\1_{X\in \msb}\1_{\msk_\varepsilon^c} \times \frac{\dd \PP}{\dd \mathbb{Q}} \right] \leq e^\varepsilon \PE_{\mathbb{Q}}\left[\1_{X\in\msb}\right]  = e^\varepsilon \PP(Y\in \msb),
    \end{align}
as required.

\subsection{Proof of \Cref{lem: data processing}}\label{subsec: proof of data processing}
By the data processing inequality (Theorem 9, \cite{6832827}) one has
\begin{equation}
        \renyi_\alpha(P^{\mathbb{P}}_{\mathcal{A}(D)}\Vert P^{\mathbb{P}}_{\mathcal{A}(D')})=\renyi_\alpha(P^{\mathbb{P}}_{\mathcal{A}(D)}\Vert P^{\mathbb{Q}}_{\mathcal{A}(D)})\leq \renyi_\alpha(\mathbb{P}\Vert \mathbb{Q}).
    \end{equation}
    The result then follows by the definition of Rényi DP in \eqref{eq:Rényi_divergence} since
    \begin{equation}
        \PE_{\mathbb{Q}}\biggr[\biggr (\frac{\dd \mathbb{P}}{\dd \mathbb{Q}}\biggr)^\alpha\biggr]=\PE\biggr[\biggr (\frac{\dd \mathbb{P}}{\dd \mathbb{Q}}\biggr)^{\alpha-1}\biggr].
    \end{equation}
    \subsection{Girsanov's theorem for SDEs}
Here we apply Girsanov's theorem to obtain the Radon-Nikodym derivative of the law of two SDEs with different drifts. The precise formulation we choose here is chosen so as to be able to apply the result to the diffusion \eqref{eq: generic diffusion} featured in \Cref{subsec: DP of SDEs}. We shall assume Novikov's condition \eqref{eq:Novikov_condition}. 
\begin{lemma}\label{lemma:girsanov_sde}
    Consider two processes $X_t^i$ for $i=1,2$ on a filtered probability space $(\Omega, \mathcal{F},(\mathcal{F}_t)_{t\geq 0}, \mathbb{P})$ that solve the SDEs
    \begin{equation}
 \dd X^1_t=g(X^1_{\kappa(t)},\eta_{\kappa(t)})\dd t+\sqrt{2/\beta}\dd W_t,
    \end{equation}
    \begin{equation}
\dd X^2_t=g(X^2_{\kappa(t)},\eta_{\kappa(t)})\dd t+u_t\dd t+\sqrt{2/\beta}\dd W_t.
    \end{equation}
    where $X^1_0,X^2_0=x_0\in \mathbb{R}^d$, $(W_t)_{t\geq 0}$ is a standard Brownian motion, $g:\mathbb{R}^d\to\mathbb{R}^d$ is a function, $\kappa:[0,\infty)\to[0,\infty)$ satisfies $\kappa(t)\leq t$, $(\eta_t)_{t\geq 0}$ is a stochastic process independent of $(W_t)_{t\geq 0}$ and $u_t$ is a $\mathcal{F}_t$-adapted process. Let the SDE given for each $i=1,2$ have a unique strong solution, and suppose for $T>0$ that
    \begin{equation}\label{eq:Novikov_condition}
        \PE\left[\exp\left( \frac{1}{2} \int_0^T \lvert u_t \rvert^2 \, \dd t \right)\right] <\infty.
    \end{equation}
    Then there exists a measure $\mathbb{Q}$ such that the random variable $(X^1_t)_{t\in [0,T]}$ on the probability space $(\Omega, \mathcal{F},(\mathcal{F}_t)_{t\geq 0}, \mathbb{Q})$ is equal in law to $(X^2_t)_{t\in [0,T]}$ on the probability space $(\Omega, \mathcal{F},(\mathcal{F}_t)_{t\geq 0}, \mathbb{P})$, and furthermore
  \begin{equation}\label{eq: Radon-Nikodym}
\frac{\dd\mathbb{Q}}{\dd\mathbb{P}}= \exp\biggr(\sqrt{\frac{\beta}{2}}\int^T_0 \langle u_s , \dd W_s\rangle-\frac{\beta}{4}\int^T_0\lvert u_s \rvert^2 \dd s\biggr).
    \end{equation}
\end{lemma}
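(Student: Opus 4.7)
The plan is to apply a standard Girsanov change-of-measure argument and then invoke uniqueness of strong solutions to identify the laws of the two processes. First, I would introduce the exponential process
\begin{equation*}
    M_t = \exp\left(\sqrt{\beta/2}\int_0^t \langle u_s, \dd W_s\rangle - \frac{\beta}{4}\int_0^t |u_s|^2 \dd s\right),
\end{equation*}
which is a continuous local $\mathbb{P}$-martingale by Itô's formula applied to $\log M_t$. Novikov's condition \eqref{eq:Novikov_condition} ensures that $M_t$ is actually a true martingale with $\PE[M_T] = 1$, so that we may define $\mathbb{Q}$ by $\dd\mathbb{Q}/\dd\mathbb{P} = M_T$, which yields the formula \eqref{eq: Radon-Nikodym} immediately. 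An application of Girsanov's theorem then shows that $\tilde{W}_t := W_t - \sqrt{\beta/2}\int_0^t u_s \dd s$ is a standard $\mathbb{Q}$-Brownian motion with respect to $(\mathcal{F}_t)_{t \geq 0}$.

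Next, I would rewrite the SDE for $X^1$ by substituting $\dd W_t = \dd\tilde{W}_t + \sqrt{\beta/2}\, u_t\, \dd t$, obtaining
\begin{equation*}
    \dd X^1_t = \bigl(g(X^1_{\kappa(t)}, \eta_{\kappa(t)}) + u_t\bigr)\, \dd t + \sqrt{2/\beta}\, \dd\tilde{W}_t.
\end{equation*}
Under $\mathbb{Q}$ the process $X^1$ therefore satisfies exactly the same SDE as $X^2$ does under $\mathbb{P}$, driven by $\tilde{W}$ rather than $W$, and with the same initial condition $x_0$.

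The most delicate step, and the one I expect to be the main obstacle, is to verify that the joint law of $(\tilde{W}, \eta)$ under $\mathbb{Q}$ coincides with the joint law of $(W, \eta)$ under $\mathbb{P}$. This is nontrivial because $u_t$ is only assumed to be $\mathcal{F}_t$-adapted and may depend on both $W$ and $\eta$, so a priori the change of measure could distort the marginal distribution of $\eta$. The remedy is to condition on $\mathcal{G} = \sigma(\eta_s : s \geq 0)$: since $\eta$ and $W$ are $\mathbb{P}$-independent, $W$ remains a standard Brownian motion conditional on $\mathcal{G}$, and a conditional version of Novikov yields $\PE[M_T \mid \mathcal{G}] = 1$ under $\mathbb{P}$. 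This already implies that $\eta$ has the same marginal law under $\mathbb{Q}$ as under $\mathbb{P}$, and applying Girsanov conditionally on $\mathcal{G}$ shows that $\tilde{W}$ is a $\mathbb{Q}$-Brownian motion independent of $\eta$, with joint distribution equal to that of $(W, \eta)$ under $\mathbb{P}$.

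Finally, by the assumed uniqueness of strong solutions the solution map sending $\bigl((W_t)_{t \in [0,T]}, (\eta_t)_{t \in [0,T]}\bigr)$ to $(X^2_t)_{t \in [0,T]}$ is well-defined and measurable. Applying the same map to $\bigl((\tilde{W}_t)_{t \in [0,T]}, (\eta_t)_{t \in [0,T]}\bigr)$ under $\mathbb{Q}$ produces $(X^1_t)_{t \in [0,T]}$, and since the two inputs have identical joint laws the two outputs have identical laws as well, which is the claim.
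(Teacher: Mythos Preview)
Your approach mirrors the paper's exactly: Novikov to make the exponential a martingale, Girsanov to produce the shifted Brownian motion, rewrite the SDE for $X^1$, and conclude by uniqueness of strong solutions. You are in fact more careful than the paper about verifying that the joint law of $(\tilde W,\eta)$ under $\mathbb{Q}$ coincides with that of $(W,\eta)$ under $\mathbb{P}$; the paper does not address this at all, and your conditioning argument on $\sigma(\eta)$ is a correct refinement.

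There is, however, a genuine gap in your final step, and the paper's proof shares it. You assert that the solution map $\Phi:(W,\eta)\mapsto X^2$ applied to $(\tilde W,\eta)$ yields $X^1$, but the SDE for $X^2$ carries the extra adapted drift $u_t$, and the lemma places no structural assumption on $u$. If $u$ is treated as an external input then $X^2=\Psi(W,\eta,u)$ and $X^1=\Psi(\tilde W,\eta,u)$ with the \emph{same} $u$, so one would need the joint $\mathbb{Q}$-law of $(\tilde W,\eta,u)$ to equal the joint $\mathbb{P}$-law of $(W,\eta,u)$; this fails in general (e.g.\ $g\equiv 0$, $\beta=2$, no $\eta$, $u_t=W_t$: then $X^1_t-x_0=W_t$ has variance $(e^{2t}-1)/2$ under $\mathbb{Q}$, whereas $X^2_t-x_0=\int_0^t(t-s+1)\,\dd W_s$ has variance $((t+1)^3-1)/3$ under $\mathbb{P}$). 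If instead $u$ is absorbed into $\Phi$ as a functional of $(W,\eta)$, then $\Phi(\tilde W,\eta)$ computes the analogous functional of $(\tilde W,\eta)$ rather than the original $u$, and again does not reproduce $X^1$. In the paper's applications $u_t$ is always of the form $v(X^1_{\kappa(t)},\eta_{\kappa(t)})$; then the rewritten equation for $X^1$ closes into an autonomous SDE with drift $g+v$ driven by $\tilde W$, and once one has your joint-law statement for $(\tilde W,\eta)$ the uniqueness argument does go through. But neither your final step as written nor the paper's covers the case of a general adapted $u$.
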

\begin{proof}
For $t\geq 0$ let us define the new process
\begin{equation}
    \widetilde{W}_t:=\sqrt{\frac{\beta}{2}}\int^t_0u_s\dd s+W_t,
\end{equation}
We may now use Girsanov's theorem to find a measure $\mathbb{Q}$ such that $(\widetilde{W}_t)_{t\in [0,T]}$ is a Brownian motion under $\mathbb{Q}$. Particularly, under \eqref{eq:Novikov_condition} one has by Corollary 5.13, Section 3.5 of \citet{karatzas1991brownian} that
        \begin{equation}
          \exp\biggr(\sqrt{\frac{\beta}{2}}\int^t_0 \langle  u_s  , \dd W_s\rangle -\frac{\beta}{4}\int^t_0\lvert u_s \rvert^2 \dd s\biggr),
        \end{equation}
is a $(\mathcal{F}_t)_{t\geq 0}$-martingale under $\mathbb{P}$, so by Girsanov's theorem, i.e. Theorem 5.1, Section 3.5 of \citet{karatzas1991brownian}, one has that $(\widetilde{W}_t)_{t\in [0,T]}$ is a Brownian motion under $\mathbb{Q}$ given by \eqref{eq: Radon-Nikodym}. Then since one may write
    \begin{equation}\label{eq: new X^1 expression}
    \dd X^1_t=g(X^1_{\kappa(t)},\eta_{\kappa(t)})\dd t+\sqrt{2/\beta}\dd \widetilde{W}_t,\;\;\;X^1_0=x_0\in \mathbb{R}^d,
    \end{equation}
    and since both SDEs have unique strong solution one sees that $(X^1_t)_{t\in [0,T]}$ defined on the probability space $(\Omega, \mathcal{F},(\mathcal{F}_t)_{t\geq 0}, \mathbb{Q})$ must be equal in law to $(X^2_t)_{t\in [0,T]}$ defined on $(\Omega, \mathcal{F},(\mathcal{F}_t)_{t\geq 0}, \mathbb{P})$.
    
    %We can reason assuming $$B_t =\frac{1}{\sqrt 2} (X_t-X_0 -\int_0^t f_1(X_s)\dd s )$$ is a standard BM under $\PP^1$. We wish to find the measure $\PP^2$ such that the process $$\tilde B_t =\frac{1}{\sqrt 2} (X_t-X_0 -\int_0^t f_2(X_s)\dd s )$$ is a standard BM under $\PP^2$. By Girsanov's theorem, we obtain that this holds when such measure has RN derivative with respect to $\PP^1$ given by $$\exp\left(\int_0^t \langle b(X^1_s), \dd B_s \rangle - \frac12\int_0^t \lVert b(X^1_s)\rVert^2\dd t\right)$$ 
    %for $b(x) = \frac{1}{\sqrt 2}(f_2(x)-f_1(x)).$ 
    %Hence this gives \eqref{eq:RN_SDE}. 
\end{proof}

%\begin{corollary}\label{cor:KL_SDE}
%    Under the conditions of \Cref{lemma:girsanov_sde}, it holds that 
%    \begin{equation}\label{eq:KL_diffusions}
 %       \renyi_1(\PP^1_T,\PP^2_T) = \PE \left[ \frac{1}{4}  \int_0^T \lVert f_{2}(X_t^1) - f_{1}(X_t^1) \rVert^2 \dd t\right].
%    \end{equation}
%\end{corollary}
%\begin{proof}
%By \eqref{eq:RN_SDE} we find
 %   \begin{align}
%       \renyi_1(\PP^1_T,\PP^2_T) & =  \PE_{\PP_T^1}\left[ \log \frac{\dd \PP_T^1}{\dd \PP_T^2}\right] \\
%        & = \PE_{\PP_T^1} \left[ \frac{1}{\sqrt 2}\int_0^T \langle f_1(X_t) - f_2(X_t) ,\dd B^2_t\rangle - \frac{1}{4}  \int_0^T \lVert f_1(X_t) - f_2(X_t) \rVert^2 \dd t \right].
%    \end{align}
%By Girsanov's theorem $B^2_t = B_t^1 - \frac{1}{\sqrt 2}\int_0^t (f_1(X_s) - f_2(X_s))\dd s$. Then notice that the integral $\int_0^T \langle f_1(X_t) - f_2(X_t) ,\dd B^1_t\rangle$ is a $\PP^1_T$-martingale and hence has expectation zero. Hence we obtain \eqref{eq:KL_diffusions}.
%\end{proof}

    \subsection{Bounds on Stochastic Integrals}
In this section we present two well-known bounds for exponentials of stochastic integrals.
\begin{lemma}\label{lem: stoch int exp}
 Let $u_t$ be a $(\mathcal{F}_t)_{t\geq 0}$-measurable process such that for some $T,K>0$ one has $\sup_{t\in [0,T]}\lvert u_t\rvert \leq K$. Then
\begin{equation}
        \PE\biggr[\exp\biggr(\int^T_0 \langle u_s, \dd W_s \rangle \biggr)\biggr]\leq e^{TK^2/2}.
    \end{equation}
    \end{lemma}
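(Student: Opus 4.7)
The plan is to use the standard trick of multiplying and dividing by the Doléans–Dade exponential so as to turn the quantity inside the expectation into a true martingale, and then use the uniform bound on $u_t$ to control the remaining deterministic factor.

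Concretely, I would first rewrite the integrand using the identity
\begin{equation}
    \exp\!\left(\int_0^T \langle u_s, \dd W_s\rangle\right)
    = \exp\!\left(\int_0^T \langle u_s, \dd W_s\rangle - \tfrac{1}{2}\int_0^T |u_s|^2 \dd s\right)
      \cdot \exp\!\left(\tfrac{1}{2}\int_0^T |u_s|^2 \dd s\right).
\end{equation}
The second factor is pathwise bounded by $e^{TK^2/2}$ since $|u_s|\le K$ uniformly on $[0,T]$, so it can be pulled out of the expectation as a deterministic constant.

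Next I would verify that the first factor is the Doléans–Dade exponential of the continuous local martingale $M_t = \int_0^t \langle u_s, \dd W_s\rangle$. Because $|u_s|\le K$, Novikov's criterion holds trivially: $\PE[\exp(\tfrac{1}{2}\int_0^T |u_s|^2\dd s)] \le e^{TK^2/2}<\infty$. Consequently (e.g.\ by Corollary~5.13, Section~3.5 in Karatzas–Shreve, as already invoked in \Cref{lemma:girsanov_sde}) the stochastic exponential $\mathcal{E}(M)_t$ is a true martingale on $[0,T]$ with $\PE[\mathcal{E}(M)_T]=\mathcal{E}(M)_0=1$. Combining this with the pointwise bound on the second factor yields
\begin{equation}
    \PE\!\left[\exp\!\left(\int_0^T \langle u_s, \dd W_s\rangle\right)\right] \le e^{TK^2/2}\cdot \PE[\mathcal{E}(M)_T] = e^{TK^2/2},
\end{equation}
which is the claim.

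There is no real obstacle here: the only subtle point is ensuring the stochastic exponential is a genuine martingale (not just a local martingale) so that its expectation is indeed $1$, and this is immediate from the uniform boundedness of $u$ via Novikov. Everything else is the standard ``complete the square'' manipulation that is the workhorse behind Girsanov-type arguments.
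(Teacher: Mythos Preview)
Your proposal is correct and is essentially the same argument as the paper's: both verify Novikov from the uniform bound $|u_s|\le K$, invoke Corollary~5.13 of Karatzas--Shreve to get that the stochastic exponential is a true martingale with expectation $1$, and then use $|u_s|\le K$ again to replace $\tfrac{1}{2}\int_0^T|u_s|^2\dd s$ by $TK^2/2$. One small wording quibble: the factor $\exp(\tfrac{1}{2}\int_0^T|u_s|^2\dd s)$ is random, not deterministic; what you really use is that it is pathwise bounded above by the constant $e^{TK^2/2}$ and that the remaining factor $\mathcal{E}(M)_T$ is nonnegative, so the bound can be pulled out of the expectation.
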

    \begin{proof}
        Since we have that $\lvert u_t\rvert \leq K$ almost surely, one sees that for every $t\geq 0$
        \begin{equation}
            \PE\biggr[\exp\biggr(\frac{1}{2}\int^t_0\lvert u_s\rvert^2 \dd s\biggr)\biggr]<\infty.
        \end{equation}
        Therefore by Corollary 5.13, Section 3.5 of \citet{karatzas1991brownian} one has that
        \begin{equation}
            \exp\biggr(\int^t_0 \langle u_s, \dd W_s\rangle-\frac{1}{2}\int^t_0\lvert u_s\rvert^2 \dd s\biggr),
        \end{equation}
        is a martingale. Then using the assumed bound $\lvert u_t\rvert \leq K$ again one has
\begin{equation}
            \PE\biggr [\exp\biggr(\int^T_0 \langle u_s, \dd W_s\rangle-TK^2/2\biggr)\biggr]\leq \PE\biggr[\exp\biggr(\int^T_0 \langle u_s, \dd W_s\rangle-\frac{1}{2}\int^T_0\lvert u_s\rvert^2 \dd s\biggr)\biggr]=1,
        \end{equation}
        at which point the result follows.
    \end{proof}
    \begin{lemma}\label{lem: stoch int tail bdds}
        For $u_t$ as in \Cref{lem: stoch int exp}, one has for any $a>0$
        \begin{equation}
            \PP\biggr (\int^T_0 \langle u_s, \dd W_s\rangle >a\biggr )\leq \exp \left(-\frac{a^2}{2TK^2}\right).
        \end{equation}
    \end{lemma}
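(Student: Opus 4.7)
The plan is to derive this as a standard Chernoff--Cramér bound, using the exponential moment control provided by the preceding \Cref{lem: stoch int exp}. The key observation is that the right-hand side of \Cref{lem: stoch int exp} is an estimate on a moment generating function, so tail bounds follow by an exponential Markov inequality together with an optimization over a scaling parameter.

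First, I would fix $\lambda > 0$ and apply Markov's inequality to the exponentiated random variable:
\[
\PP\left(\int_0^T \langle u_s, \dd W_s\rangle > a\right) = \PP\left(\exp\left(\lambda \int_0^T \langle u_s, \dd W_s\rangle\right) > e^{\lambda a}\right) \leq e^{-\lambda a}\,\PE\left[\exp\left(\int_0^T \langle \lambda u_s, \dd W_s\rangle\right)\right].
\]
Next, I would note that the rescaled integrand $\lambda u_t$ is still $(\mathcal{F}_t)_{t\geq 0}$-measurable and satisfies $\sup_{t\in[0,T]} |\lambda u_t| \leq \lambda K$, so \Cref{lem: stoch int exp} applied with $K$ replaced by $\lambda K$ yields
\[
\PE\left[\exp\left(\int_0^T \langle \lambda u_s, \dd W_s\rangle\right)\right] \leq \exp\!\left(\tfrac{1}{2} T \lambda^2 K^2\right).
\]
Combining these gives $\PP(\int_0^T \langle u_s, \dd W_s\rangle > a) \leq \exp\!\left(-\lambda a + \tfrac{1}{2}T\lambda^2 K^2\right)$ for every $\lambda > 0$.

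Finally, I would optimize the exponent over $\lambda > 0$. Differentiating $-\lambda a + \tfrac{1}{2}T\lambda^2 K^2$ in $\lambda$ and setting the derivative to zero yields the minimizer $\lambda^\star = a/(TK^2)$, at which the exponent equals $-a^2/(2TK^2)$, giving the claimed bound. There is no real obstacle here: the only thing to check is that \Cref{lem: stoch int exp} is applicable to $\lambda u_t$, which is immediate since boundedness of $u_t$ is preserved under multiplication by a constant.
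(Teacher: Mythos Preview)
Your proposal is correct and follows essentially the same approach as the paper: an exponential Markov (Chernoff) bound using \Cref{lem: stoch int exp} applied to the rescaled process, followed by optimization over the scaling parameter. The only cosmetic difference is notation ($\lambda$ versus the paper's $\upsilon$); the argument and the optimizer $\lambda^\star = a/(TK^2)$ are identical.
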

    \begin{proof}
        Let $\upsilon>0$ and $a\in\R$. By Markov's inequality
        \begin{equation}
        \PP\biggr (\int^T_0 \langle u_s, \dd W_s\rangle>a\biggr )=\PP\biggr (\exp\biggr(\upsilon\int^T_0 \langle u_s, \dd W_s\rangle \biggr)>e^{a\upsilon}\biggr )\leq e^{-a\upsilon}\PE\biggr[\exp\biggr(\upsilon\int^T_0 \langle u_s, \dd W_s\rangle \biggr)\biggr],
    \end{equation}
    so that applying \Cref{lem: stoch int exp} one has
    \begin{equation}
        \PP\biggr (\int^T_0 \langle u_s, \dd W_s\rangle >a\biggr )\leq e^{-a\upsilon+T\upsilon^2K^2/2}.
    \end{equation}
    The result follows by minimising $\upsilon$.
    \end{proof}

%\begin{lemma}
%    For $\frac{d\mathbb{Q}}{d\mathbb{P}}$ as given in \eqref{eq: RN der from Girsanov}, under the assumptions $\lvert u_t\rvert \leq K$ from Lemma \ref{lemma:girsanov_sde} one has that the inverse $\frac{d\mathbb{P}}{d\mathbb{Q}}$ is well defined and furthermore for every $\alpha >1$ and $\delta>0$ one has
%\end{lemma}

\subsection{Proof of Proposition \ref{prop: dp of whole traj}}\label{subsubsec: proof of privacy of path}
Let us fix adjacent datasets $\dataset,\dataset'\in \mathcal{S}$. Then by the assumptions and the version of Girsanov's theorem presented in \Cref{lemma:girsanov_sde}, one sees that there exists a measure $\mathbb{Q}$ under which $(X^\dataset_t)_{t\in [0,T]}$ is equal in law to $(X^{\dataset'}_t)_{t\in [0,T]}$ under $\mathbb{P}$, that is
\begin{equation}
    P^{\mathbb{Q}}_{(X^\dataset_t)_{t\in [0,T]}}=P^{\mathbb{P}}_{(X^{\dataset'}_t)_{t\in [0,T]}},
\end{equation}
and furthermore
    \begin{equation}\label{eq: RN der from Girsanov}
        \frac{\dd \mathbb{Q}}{\dd \mathbb{P}}=\exp\biggr(\sqrt{\frac{\beta}{2}}\int^T_0 \langle  h(X^\dataset_{\kappa(t)},\eta_{\kappa(t)}), \dd W_t\rangle -\frac{\beta}{4}\int^T_0 \lvert h(X^\dataset_{\kappa(t)},\eta_{\kappa(t)})\rvert ^2 \dd t\biggr),
    \end{equation}
    for
    \begin{equation}
h(x,\eta):=f(x,\eta,\dataset)-f(x,\eta, \dataset').
    \end{equation}
    Then, \Cref{lem: epsilon delta privacy,lem: data processing} give that DP and Rényi-DP can be obtained by controlling respectively the tails and the moments of $\frac{\dd \mathbb{P}}{\dd \mathbb{Q}}$.

    Let us first focus on the DP bound. For any $\delta>0$, let  $\varepsilon_\delta:=Tc^2\beta/4+c\sqrt{\beta T\log(1/\delta)}$.
    Since $\lvert h\rvert\leq c$ by assumption, one has
    \begin{align}
        \PP\biggr (\frac{\dd \mathbb{P}}{\dd \mathbb{Q}}>e^{\varepsilon_\delta}\biggr)&\leq \PP\biggr(\exp\biggr(-\sqrt{\frac{\beta}{2}}\int^T_0 \langle h(X^\dataset_{\kappa(t)},\eta_{\kappa(t)}), \dd W_t\rangle \biggr)> \exp\biggr(\varepsilon_\delta-\frac{\beta}{4}\int^T_0 \lvert h(X^\dataset_{\kappa(t)},\eta_{\kappa(t)})\rvert ^2 \dd t) \biggr)\biggr)\\
       & \leq \PP \biggr(\exp\biggr(-\sqrt{\frac{\beta}{2}}\int^T_0 \langle h(X^\dataset_{\kappa(t)},\eta_{\kappa(t)}), \dd W_t\rangle \biggr)> \exp\left(\varepsilon_\delta-\frac\beta4 Tc^2\right) \biggr)\\
       & = \PP \biggr( \int^T_0 \langle -\sqrt{\frac{\beta}{2}} h(X^\dataset_{\kappa(t)},\eta_{\kappa(t)}), \dd W_t\rangle >  c\sqrt{\beta T\log(1/\delta)} \biggr)
    \end{align}
    Now, applying \Cref{lem: stoch int tail bdds} for $u_t = -\sqrt{\frac{\beta}{2}} h(X^\dataset_{\kappa(t)},\eta_{\kappa(t)})$, which under our assumptions satisfies $\lvert u_t\rvert \leq c \sqrt{\frac{\beta}{2}},$ we find 
    \begin{align}
        \PP\biggr (\frac{\dd \mathbb{P}}{\dd \mathbb{Q}}>e^{\varepsilon_\delta}\biggr)&\leq \delta.
    \end{align}
    Finally, \Cref{lem: epsilon delta privacy} gives the wanted $(\varepsilon, \delta)$-DP bound. 
    
    Now we derive the bound on the Rényi-DP for $\alpha>1$. \Cref{lem: data processing} shows that it is sufficient to bound
    \begin{align}
        \PE\biggr [ \biggr(\frac{\dd \mathbb{P}}{\dd \mathbb{Q}}\biggr)^{\alpha-1}\biggr]&=\PE\biggr[\exp\biggr(\frac{\beta(\alpha-1)}{4}\int^T_0 \lvert h(X^\dataset_{\kappa(t)},\eta_{\kappa(t)})\rvert ^2 \dd t-(\alpha-1)\sqrt{\frac{\beta}{2}}\int^T_0 \langle  h(X^\dataset_{\kappa(t)},\eta_{\kappa(t)}), \dd W_t\rangle \biggr)\biggr].
    \end{align}
    We can bound the first term using that $\lvert h(X^\dataset_{\kappa(t)},\eta_{\kappa(t)})\rvert  \leq c$, while the second term can be dealt applying \Cref{lem: stoch int exp} for $u_t = (1-\alpha)(\beta/2)^{1/2}h(X^\dataset_{\kappa(t)},\eta_{\kappa(t)})$. This gives
    \begin{align}
        \PE\biggr [ \biggr(\frac{\dd \mathbb{P}}{\dd \mathbb{Q}}\biggr)^{\alpha-1}\biggr]
        &\leq \exp\biggr(\frac{\beta(\alpha-1)}{4}Tc^2\biggr)\PE\biggr[\exp\biggr(\int^T_0 \langle  (1-\alpha)\sqrt{\frac{\beta}{2}} h(X^\dataset_{\kappa(t)},\eta_{\kappa(t)}), \dd W_t\rangle \biggr)\biggr]\\
        &\leq \exp\left(\frac{\beta}4 Tc^2\alpha(\alpha-1) \right).
    \end{align}
    By \Cref{lem: data processing} this implies Rényi-DP with $\varepsilon = \frac{\beta}4 Tc^2\alpha.$
  % Taking the $\alpha\to 1$ limit then provides the result for KL-privacy, i.e. $\alpha=1$.

\subsection{Proof of \Cref{prop: dp of final value}}\label{subsubsec: proof of dp of final value}
Let us assume $T\geq 1$, noting that the result holds by Proposition \ref{prop: dp of whole traj} for $T\leq 1$. Let us fix adjacent datasets $\dataset, \dataset'\in \mathcal{S}$ and let us define the process $(Z_t)_{t\in [0,T]}$ by
\begin{equation}
    \dd Z_t=f_\dataset(Z_{\kappa(t)}, \eta_{\kappa(t)})\dd t+u_t\dd t+\sqrt{2/\beta}\dd W_t,\;\;\;Z_0=x_0\in \mathbb{R}^d,
\end{equation}
where $u_t$ is supported on $t\in [T-1,T]$ and in particular
\begin{equation}
    u_t:=[ f_{\dataset'}(X^{\dataset'}_{\kappa(t)},\eta_{\kappa(t)})-f_\dataset(Z_{\kappa(t)}, \eta_{\kappa(t)}) -(Z_{T-1}-X^{\dataset'}_{T-1})]\1_{t\in [T-1,T]}.
\end{equation}
Then, for $t\in [T-1,T]$ one has
\begin{align}
    Z_t-X^{\dataset'}_t&=Z_{T-1}-X^{\dataset'}_{T-1}+\int^t_{T-1}\left(f_\dataset(Z_{\kappa(s)}, \eta_{\kappa(s)})+u_s-f_{\dataset'}(X^{\dataset'}_{\kappa(s)},\eta_{\kappa(s)}) \right)\dd s\\
    &=Z_{T-1}-X^{\dataset'}_{T-1}-\int^t_{T-1}(Z_{T-1}-X^{\dataset'}_{T-1})\dd s\\
    & = (T-t) \left( Z_{T-1}-X^{\dataset'}_{T-1}\right),
\end{align}
and hence $Z_T=X^{\dataset'}_T$ almost surely. Furthermore, since \eqref{eq: generic diffusion} has a unique strong solution for each dataset one has that $Z_t=X^{\dataset}_t$ for $t\in [0,T-1]$ almost surely. Therefore by assumption one has for $t\in [0,T-1]$ that
\begin{equation}
\lvert Z_t-X^{\dataset'}_t\rvert=\lvert X^\dataset_t-X^{\dataset'}_t\rvert\leq C.
\end{equation}
Furthermore note that, for $t\in [T-1,T]$,
\begin{align}
    \lvert Z_t-X^{\dataset'}_t\rvert &= (T-t)\lvert X^{\dataset}_{T-1}-X^{\dataset'}_{T-1}\rvert \\
    & \leq \lvert X^{\dataset}_{T-1}-X^{\dataset'}_{T-1}\rvert\\
    & \leq C.
\end{align}
Hence, for $t\in [0,T]$ one has
\begin{equation}
\lvert Z_{\kappa(t)}-X^{\dataset'}_{\kappa(t)}\rvert \leq C.
\end{equation}
Using \eqref{eq: drift assump} one may therefore bound
\begin{equation}
    \lvert u_t\rvert\leq L\lvert X^{\dataset'}_{\kappa(t)} - Z_{\kappa(t)}\rvert  + c + C \leq C(L+1)+c.
\end{equation}
Now observe that by \Cref{lemma:girsanov_sde}, under the measure $\mathbb{Q}$ given as
\begin{equation}
    \dd \mathbb{Q}=\exp\biggr (\sqrt{\frac{\beta}{2}}\int^T_{T-1} \langle u_t,\dd W_t\rangle -\frac{\beta }{4}\int^T_{T-1} \lvert u_t\rvert^2\dd t\biggr )\dd \mathbb{P},
\end{equation}
one has that 
\begin{equation}\label{eq: equivilence of measures}
    P^{\mathbb{Q}}_{(Z_t)_{t\in [0,T]}}=P^{\mathbb{P}}_{(X^{\dataset}_t)_{t\in [0,T]}}.
\end{equation}
Moreover, since $Z_T=X^{\dataset'}_T$ almost surely, by \eqref{eq: equivilence of measures} one has that
\begin{equation}
P^{\mathbb{Q}}_{X^{\dataset'}_T}=P^{\mathbb{Q}}_{Z_T}=P^{\mathbb{P}}_{X^{\dataset}_T}.
\end{equation}
Now, we wish to apply \Cref{lem: epsilon delta privacy,lem: data processing} to obtain the wanted guarantees respectively on the DP and Rényi-DP of our randomised algorithm.  For any $\delta>0$ and $\alpha>1$, let us set $\varepsilon_\delta=C_2/4+\sqrt{C_2\log(1/\delta)}$ and $\varepsilon_\alpha=\alpha C_2/4$, where $C_2=\beta(C(L+1)+c)^2$.
Proceeding as in the proof of \Cref{prop: dp of whole traj} one may show that
\begin{equation}
    \PP\biggr (\frac{\dd \mathbb{P}}{ \dd \mathbb{Q}}\geq \varepsilon_\delta\biggr)\leq \delta, \qquad
    \renyi_\alpha(\mathbb{P}\Vert \mathbb{Q})\leq \varepsilon_\alpha,
\end{equation}
which is sufficient to apply \Cref{lem: epsilon delta privacy,lem: data processing}.

\section{Proofs for \Cref{sec: privacy of stoch alg}}

\subsection{Proof of \Cref{thm: Bayesian ULA}}\label{subsubsec: proof of ula}
Let us fix adjacent datasets $\dataset,\dataset'\in \mathcal{S}$ and consider two versions $(x^\dataset_n)_{n\geq 1}$ and $(x^{\dataset'}_n)_{n\geq 1}$ of the ULA algorithm \eqref{eq: ULA} targeting $\pi_\dataset$ and $\pi_{\dataset'}$ respectively. Let $(x^\dataset_n)_{n\geq 1}$ and $(x^{\dataset'}_n)_{n\geq 1}$ be driven by the same sequence $(z_n)_{n\geq 1}$ of iid standard Gaussians. Let us define
\begin{equation}
    e_n:=x^\dataset_n-x^{\dataset'}_n,
\end{equation}
so that by the strong monotonicity and $L$-Lipschitz assumption on $\nabla K$ one has
\begin{equation}
    \lvert e_n-\gamma \nabla K(x^\dataset_n)-\gamma \nabla K(x^{\dataset'}_n)\rvert^2\leq (1-2\gamma a+\gamma^2L^2)\lvert e_n\rvert^2.
\end{equation}
Now let us show
\begin{equation}\label{eq: bound for closeness ULA}
    0\leq 1-2\gamma a+\gamma^2L^2\leq 1.
\end{equation}
For the lower bound one can show by standard calculus that the minimum value attained for $\gamma>0$ in the above expression is $1-a^2/L^2$. To see that this is greater than $0$, note that by the monotonicity assumption \eqref{eq: mon assumption K}, applying Cauchy-Schwarz one must have
\begin{equation}
    \lvert \nabla K(x)-\nabla K(y)\rvert\geq a \lvert x-y\rvert,
\end{equation}
and therefore one must have $L\geq a$. The upper bound in \eqref{eq: bound for closeness ULA} follows simply by the assumption $\gamma \in (0,2a/L^2)$. Then since $\sqrt{1-x}\leq 1-x/2$ for $x\in [0,1]$, one may conclude
\begin{equation}
    \lvert e_n-\gamma \nabla K(x^\dataset_n)-\gamma \nabla K(x^{\dataset'}_n)\rvert \leq (1-\gamma a+\gamma^2L^2/2)\lvert e_n\rvert.
\end{equation}
By the assumption that that $\lvert \nabla V\rvert, \lvert \nabla V'\rvert \leq c$, one therefore has that
% \alain{for the future, see more assumption on $\nabla V^{\dataset}$, in fact we can deal with less restricted assumption using reflexion coupling! !-)}
\begin{align}
    \lvert e_{n+1}\rvert &\leq \lvert e_n-\gamma \nabla K(x^\dataset_n)-\gamma \nabla K(x^{\dataset'}_n)\rvert+\lvert \gamma \nabla V^{\dataset}(x^\dataset_n)-\gamma \nabla V^{\dataset'}(x^{\dataset'}_n)\rvert \nonumber \\
    &\leq (1-\gamma a+\gamma^2L^2/2)\lvert e_n\rvert+2\gamma c.
\end{align}
Iterating this bound from $n=0$ (since $e_0=0$) one obtains for all $n\geq 1$ that
\begin{equation}\label{eq: closeness bound ULA}
    \lvert e_n\rvert \leq \frac{2c}{ a-\gamma L^2/2}.
\end{equation}
Now let us consider the following interpolation of the ULA algorithm. Let us define $\kappa_\gamma:[0,\infty)\to[0,\infty)$ by $\kappa_\gamma(t):=\gamma\lfloor t/\gamma\rfloor$, that is, the projection backwards onto $\{0,\gamma, 2\gamma, ...\}$. Then consider the continuous time process 
    \begin{equation}
            \dd X^{\dataset}_t=-\nabla U_\dataset(X^\dataset_{\kappa_\gamma(t)})\dd t+\sqrt{2}\dd W_t,\;\;\;X^\dataset_0=x_0\in \mathbb{R}^d,
        \end{equation}
        and likewise for $\dataset'$. It is easy to verify that since the coefficients of the drift are fixed in-between the time-discretisation grid $\{0,\gamma, 2\gamma, ...\}$, one has that the law of $X^\dataset_{n\gamma}$ is equal to the law of $(x^\dataset_n)$, and likewise for $\dataset'$. Furthermore, since $X^{\dataset}_T$ and $X^{\dataset'}_t$ are driven by the same Brownian motion, one may apply \eqref{eq: closeness bound ULA} to conclude that
        \begin{equation}
        \sup_{t\geq 0}\;\lvert X^\dataset_{\kappa_\gamma(t)}-X^{\dataset'}_{\kappa_\gamma(t)}\rvert \leq \frac{2c}{ a-\gamma L^2/2}.
        \end{equation}
        So as to satisfy the assumptions of Proposition \ref{prop: dp of final value} we need to extend this bound from grid points $\kappa_\gamma(t)$ to the whole of $t\geq 0$. To see this note that 
        \begin{equation}\label{eq: between grid points}
          X^{\dataset}_t-X^{\dataset'}_t=X^\dataset_{\kappa_\gamma(t)}-X^{\dataset'}_{\kappa_\gamma(t)}-(t-\kappa_\gamma(t))[\nabla U_\dataset(X^\dataset_{\kappa_\gamma(t)})-\nabla U_{\dataset'}(X^{\dataset'}_{\kappa_\gamma(t)})],
        \end{equation}
        and furthermore for arbitrary $x,y \in \mathbb{R}^d$ and $t>0$, if one defines
\begin{equation}
    f(t):=\lvert x +yt\rvert^2= \lvert x \rvert^2+2t\langle a,y \rangle+t^2\lvert y\rvert^2,
\end{equation}
then since $f''>0$ the function $f$ cannot have a local maximum anywhere. Therefore $$\sup_{t\in [a,b]}f(t)\leq \max\{f(a), f(b)\}.$$ Applying this principle to \eqref{eq: between grid points} one sees that
\begin{equation}
        \sup_{t\geq 0}\; \lvert X^{\dataset}_t-X^{\dataset'}_t\rvert \leq \frac{2c}{ a-\gamma L^2/2}.
        \end{equation}
        Therefore one sees that the assumptions of Proposition \ref{prop: dp of final value} are satisfied for $L,c>0$ as in the Proposition, and $C=\frac{2c}{ a-\gamma L^2/2}$, and the result therefore follows by choosing $T=n\gamma$

\subsection{Proof of \Cref{thm: non-convex sgd}}\label{subsubsec: proof of sgd non-convex}
As in the proof of Theorem \ref{thm: Bayesian ULA}, for adjacent data $\dataset=(d_1,...,d_m),\dataset'=(d_1',...,d_m')$ we show that $x^\dataset_n$ and $x^{\dataset'}_n$ are almost surely close. The strategy here is essentially the same. Letting $e_n:=x^\dataset_n-x^{\dataset'}_n$ as before, one has
\begin{align}
    \Bigg \lvert e_n-\frac{\gamma}{s}\sum_{i\in A_{n+1}} (\nabla  k(x^\dataset_n)-\nabla k(x^{\dataset'}_n))\Bigg\rvert^2 & =\lvert e_n\rvert^2-2\frac{\gamma}{s}\sum_{i\in A_{n+1}}\langle e_n, \nabla k(x^\dataset_n)-\nabla k(x^{\dataset'}_n)\rangle\nonumber \\
    & \quad +\biggr \lvert \frac{\gamma}{s}\sum_{i\in A_{n+1}} (\nabla  k(x^\dataset_n)-\nabla k(x^{\dataset'}_n))\biggr \rvert^2,
\end{align}
so that since by the Lipschitz assumption and the fact $A_{n+1}\subset\{1,2,...,m\}$ is of size $s$, one obtains
\begin{equation}
\biggr \lvert \frac{\gamma}{s}\sum_{i\in A_{n+1}} (\nabla  k(x^\dataset_n)-\nabla k(x^{\dataset'}_n))\biggr \rvert\leq \gamma L\lvert e_n\rvert.
\end{equation}
Then applying the convexity assumption one obtains
\begin{align}\label{eq: convex calc SGD}
    \lvert e_n-\frac{\gamma}{s}\sum_{i\in A_{n+1}} (\nabla  k(x^\dataset_n)-&\nabla k(x^{\dataset'}_n))\rvert^2\leq (1-2a +L^2\gamma^2)\lvert e_n\rvert^2,
\end{align}
so that since as before one has $0\leq 1-2a +L^2\gamma^2\leq 1$, one may square root to see that
\begin{align}\label{eq: SGD calculation}
    \lvert e_{n+1}\rvert=\lvert e_n-\frac{\gamma}{s}\sum_{i\in A_{n+1}} (\nabla  k(x^\dataset_n)-\nabla k(x^{\dataset'}_n))\rvert^2+\frac{\gamma}{s}\sum_{i\in A_{n+1}}\lvert \nabla_x v(x^\dataset_n,d_i)-\nabla_x v(x^{\dataset'}_n,d_i')\rvert,
\end{align}
so that using the assumption $\lvert \nabla_x v (\cdot,\cdot)\rvert \leq c$ one sees that the bound \eqref{eq: closeness bound ULA} from before also holds in this setting.

Now let us find a continuous time interpolation of the SGD process \eqref{eq: SGD}. For the stochastic gradient we consider the process $(\eta_t)_{t\geq 0}$ given for $n\in \mathbb{N}$ and $t\in (n\gamma, (n+1)\gamma)$ as $\eta_t=A_{n+1}$, where $(A_n)_{n\geq 1}$ is the sequence of i.i.d. random variables uniformly distributed on subsets of size $s\leq m$ of $\{1,2,...,m\}$. Then for every data $\dataset=(x_1,...,x_m)$ one may define the process $(Y^\dataset_t)_{t\geq 0}$ as the solution to the SDE
\begin{equation}\label{eq: SGD cont}
\dd Y^\dataset_t=-\frac{1}{s}\sum_{i\in \eta_t}\nabla _x \ell(Y^\dataset_{\kappa_\gamma(t)}, d_i)\dd t+\sqrt{2/\beta}\dd W_t,
    \end{equation}
    where $\kappa_\gamma$ is the backwards projection to $\{0,\gamma, 2\gamma,...\}$ as in the proof of Theorem \ref{thm: Bayesian ULA}. Then it is easy to verify that the law of $Y^\dataset_{n\gamma}$ is equal to the law of $x^\dataset_n$, at which point one can use the strategy from the proof of Theorem \ref{thm: Bayesian ULA} to obtain that
\begin{equation}
           \sup_{t\geq 0} \lvert Y^\dataset_t-Y^{\dataset'}_t \rvert \leq \frac{2c}{ a-\gamma L^2/2},
        \end{equation}
        at which point as before the result follows from \Cref{prop: dp of final value}, paying attention to the value of $\beta>0$ assumed in the definition \eqref{eq: SGD} of the SGD algorithm.
\subsection{Proof of \Cref{thm: convex sgd}}\label{subsubsec: proof of sgd convex}
Here we modify slightly proof of Theorem \ref{thm: non-convex sgd} previously. Specifically, note that if $\dataset=(d_1,...,d_m)$ and $\dataset'=(d_1',...,d_m')$ are adjacent then only one of their entries is different, that is there exists $i\in\{1,2,...,m\}$ such that $d_i\not=d_i'$, however for $j\in \{1,2,...,m\}$ such that $j\not =i$ one has $d_j=d_j'$. Therefore, since we have assumed $v_\theta(\cdot,d)$ to be constant for every $d\in \dataset$, one may bound
\begin{align}
\frac{\gamma}{s}\sum_{i\in A_{n+1}}\lvert \nabla_x v(x^\dataset_n,d_i)-\nabla_x v(x^{\dataset'}_n,d_i')\rvert\leq \frac{2c\gamma}{s},
\end{align}
and therefore using \eqref{eq: convex calc SGD} and \eqref{eq: SGD calculation} one has that
\begin{align}
    \lvert e_{n+1}\rvert\leq (1-\gamma a+\gamma^2L^2/2)\lvert e_n\rvert+\frac{2c\gamma}{s},
\end{align}
and as a result
\begin{equation}
    \sup_{n\geq 1}  \lvert e_n\rvert\leq  \frac{2c}{s(\gamma L^2/2-\mathcal{Y})}.
\end{equation}
Then extending this bound to the continuous interpolation \eqref{eq: SGD cont} as in the proofs of \Cref{thm: Bayesian ULA,thm: non-convex sgd}, and applying \Cref{prop: dp of final value} as before, the result follows.
\subsection{Proof of Theorem \ref{thm: path of SGD}}\label{subsubsec: proof of sgd path}
The proof here is very simple: note that the assumptions imply that for every pair of adjacent datasets $ \dataset,\dataset'$ and $x \in \mathbb{R}^d$, since $A_{n+1}$ is a subset of size $s\leq m$ of $\{1,2,...,m\}$, one has $\dataset =(d_1,...,d_m)$, $\dataset' =(d_1',...,d_m')$ that differ in only one element, so
\begin{equation}
    \left\lvert \frac{1}{s}\sum_{i\in \eta_t}\nabla _x \ell(x, d_i)-\frac{1}{s}\sum_{i\in \eta_t}\nabla _x \ell(x, d'_i) \right\rvert \leq c/s.
\end{equation}
Therefore one may apply \Cref{prop: dp of whole traj} to \eqref{eq: SGD cont}, and then since the law of $(Y^\dataset_0,Y^\dataset_\gamma, ...Y^\dataset_{n\gamma})$ is equal to the law of $(x^\dataset_0,...,x^\dataset_n)$ and $(Y^\dataset_0,Y^\dataset_\gamma, ...Y^\dataset_{n\gamma})$ is equal to a mapping from $(Y_t)_{t\in[0,n\gamma]}$, the result follows from the data processing inequality (see Theorem 9 in \citet{6832827}).

\end{document}